\numberwithin{equation}{section}
\theoremstyle{plain}
\newtheorem{thm}{\protect\theoremname}[section]
\theoremstyle{remark}
\newtheorem{rem}[thm]{\protect\remarkname}
\theoremstyle{plain}
\newtheorem{prop}[thm]{\protect\propositionname}
\theoremstyle{plain}
\newtheorem{lem}[thm]{\protect\lemmaname}
\setlist[enumerate,1]{label=(\roman*)}  % 1st layer (i)
\DeclareMathOperator{\E}{{\mathds E}}
\DeclareMathOperator*{\Tr}{tr}
\newtheorem{assumption}{Assumption}
\providecommand{\lemmaname}{Lemma}
\providecommand{\propositionname}{Proposition}
\providecommand{\remarkname}{Remark}
\providecommand{\theoremname}{Theorem}
\begin{document}
\title{A Bound on the Maximal Marginal Degrees of Freedom}
\author{Paul Dommel\thanks{University of Technology, Chemnitz, Faculty of mathematics. 90126
Chemnitz, Germany}}
\maketitle
\begin{abstract}
Kernel ridge regression, in general, is expensive in memory allocation
and computation time. This paper addresses low rank approximations
and surrogates for kernel ridge regression, which bridge these difficulties.
The fundamental contribution of the paper is a lower bound on the
minimal rank such that the prediction power of the approximation remains
reliable. Based on this bound, we demonstrate that the computational
cost of the most popular low rank approach, which is the Nyström method,
is almost linear in the sample size. This justifies the method from
a theoretical point of view. Moreover, the paper provides a significant
extension of the feasible choices of the regularization parameter.

The result builds on a thorough theoretical analysis of the approximation
of elementary kernel functions by elements in the range of the associated
integral operator. We provide estimates of the approximation error
and characterize the behavior of the norm of the underlying weight
function. \medskip{}

\noindent \textbf{Keywords:} kernel methods · reproducing kernel Hilbert
spaces · statistical machine learning~· Nyström method

\end{abstract}

\section{Introduction}

A large and important class of algorithms in machine learning involves
kernels. Due to the famous kernel trick, these methods are able to
reflect nonlinear relationships, which makes kernels attractive for
a variety of differing machine learning tasks. These tasks include
classification and regression, and they are employed to reduce the
dimension of the problem under consideration as well. Most importantly,
kernel methods have proven to be valuable in different scientific
areas such as geostatistics (cf.\ \citet{Honarkhah2010}), stochastic
optimization (cf.\ \citet{DommelPichler2023}, \citet{Hanasusanto}),
digit recognition (cf.\ \citet{Scholkopf1997}), computer vision
(cf.\ \citet{ZhangComputerVision}) and bio informatics (cf.\ \citet{Schlkopf2005Biologie}).

The most popular approach is kernel ridge regression (KRR), which
aims to infer a response variable~$Y$ based on an observed variable~$X$.
To this end, KRR approximates the conditional expectation $f_{0}(x)=\E\left(\left.Y\right|X=x\right)$
by a linear combination $\sum_{i=1}^{n}w_{i}k(\cdot,x_{i})$ of kernels
located at the sampling points. Despite excellent theoretical properties,
such as consistency and fast convergence (see, for example, \citet{Caponnetto2006},
\citet{eberts11_neurips} or \citet{dommel2021uniform}), the approach
is intractable in large scale scenarios as it suffers from its high
computational requirements. Indeed, its complexity heavily depends
on the sample size, which \textendash{} in the standard way \textendash{}
is of order $\mathcal{O}(n^{2})$ in memory and $\mathcal{O}(n^{3})$
in time, respectively. 

To overcome these computational difficulties, several techniques have
been developed. One class of methods combines iterative algorithms
(gradient descent or the conjugate gradient method (CG), e.g.)\ with
early stopping, which reduces the computation time of the regression
coefficients (cf.\ \citet{Raskutti2011}, \citet{Rosasco2007}).
Other methods are based on low rank approximations of the kernel matrix
(cf.\ \citet{Fine2002}) or the kernel function itself (see \citet{Rahimi2007RandomFF}).
The \emph{Falcon algorithm}, presented in \citet{Rudi2017FALKONAO},
combines both ideas, a low rank formulation of the regression problem
with a preconditioned CG approach. Another notable method is the \emph{divide
and conquer} approach presented in \citet{Zhang2013}. The method
divides the data set into subsets, determines the corresponding kernel
ridge regressions and averages them afterwards. Given some conditions,
which are generally hard to verify, both methods achieve the same
minimax convergence rate as kernel ridge regression, while substantially
reducing the computational effort.

One of the most famous techniques to improve the numerical scheme
is the \emph{Nyström method}, which was introduced in \citet{WlliamsSeeger2000}.
Due to its simplicity and its computational benefits it is employed
frequently in a variety of different areas (see, for example, \citet{Sriperumbudur2021}
and \citet{altschuler2019massively}). The key idea of the Nyström
method is to determine the regression weights based a low rank approximation
of the kernel matrix $(k(x_{i},x_{j}))_{i,j=1}^{n}$. This approximation
involves $p<n$ centers chosen from the sampling points and it allows
computing the regression weights in time of order $\mathcal{O}(np^{2})$.
If $p$ is small, this is a significant improvement compared to $\mathcal{O}(n^{3})$
of the standard approach. 

The inspiring work \citet{NIPS2015_03e0704b} analyzes the Nyström
method from a theoretical point of view and provides statistical guarantees
for its predictive performance. The authors show that the Nyström
method admits similar convergence rates as kernel ridge regression,
provided that the amount of centers~$p$ exceeds a specific threshold.
This threshold depends linearly on the \emph{maximal marginal degrees
of freedom,
\begin{equation}
\mathcal{N}_{\infty}(\lambda)\coloneqq\sup_{x\in\mathcal{X}}\left\langle k_{x},(\lambda+L_{k})^{-1}k_{x}\right\rangle _{k}.\label{eq:1}
\end{equation}
}Here, $L_{k}$ denotes the integral operator 
\begin{equation}
(L_{k}w)(y)\coloneqq\int_{\mathcal{X}}w(x)k(x,y)p(x)dx,\label{eq:intop}
\end{equation}
$\mathcal{X}\subset\mathbb{R}^{d}$ is the domain of the sampling
points, and $k_{x}\coloneqq k(\cdot,x)$.

The main drawback of this important theoretical threshold is that
the magnitude $\mathcal{N}_{\infty}(\lambda)$ and its behavior in
$\lambda$ are generally unknown. In the worst case scenario, $\mathcal{N}_{\infty}(\lambda)$
might grow proportionally to $\lambda^{-1}$, which, for the usual
regularization choice $\lambda=cn^{-1}$, implies that the lower bound
of centers has to be at least linear in the sample size $n$. In other
words, the computational complexity of the Nyström method grows as
fast as the complexity of kernel ridge regression, which is not an
improvement and contradicts the purpose of the method.

This paper addresses this issue by exploring the asymptotic behavior
of the maximal marginal degrees of freedom for a specific class of
kernel functions. The main result is an upper bound on $\mathcal{N}_{\infty}(\lambda)$,
which grows significantly more slowly than $\lambda^{-1}$, provided
that the kernel is sufficiently smooth. This justifies the Nyström
method from a theoretical point of view. Our approach relies on the
crucial identity
\begin{equation}
\left\langle k_{x},(\lambda+L_{k})^{-1}k_{x}\right\rangle _{k}=\min_{w\in L^{2}(P)}\lambda n\left\Vert w\right\Vert _{L^{2}(P)}^{2}+\left\Vert L_{k}w-k_{x}\right\Vert _{k}^{2},\label{eq:Intro}
\end{equation}
which relates $\mathcal{N_{\infty}}(\lambda)$ with a convex optimization
problem. We study the latter by employing a suitably chosen weight
function~$w$, based on which we establish an upper bound on the
objective value. This bound is uniform in the location $x\in\mathcal{X}$
and therefore sufficient for the desired quantity~\eqref{eq:1}.
Building on this crucial result, we then refine two major results
from \citet{NIPS2015_03e0704b}. First, we provide an explicit lower
bound on the necessary number of Nyström centers, demonstrating that
this method is asymptotically cheaper than kernel ridge regression,
while maintaining the same convergence rate. The second main implication
addresses the regularization for general kernel methods. The result
extends common regularization choices, which are inversely proportional
to the sample size, towards significantly smaller regularization parameters.
It connects the discrete learning setting with the continuous framework,
which constitutes the foundation for the theoretical analysis of kernel
methods. 

\paragraph{Outline of the paper. }

Section~\ref{sec:2} addresses reproducing kernel Hilbert spaces,
recalls their relevant properties and introduces the notation. The
subsequent Section~\ref{sec:3} presents the main result of the paper.
That is, the bound on the maximal marginal degrees of freedom, based
on which we conclude an explicit bound for a sufficient amount of
Nyström centers as well as proper regularization choices. Section~\ref{sec:4}
summarizes the results, while the final Section~\ref{sec:5} provides
the proof of the main result. 

\section{\label{sec:2} Preliminaries and notation}

\emph{Reproducing kernel Hilbert spaces} (RKHS) constitute an exclusive
class of Hilbert spaces, which are considered in a variety of different
mathematical fields. Due to the famous kernel trick (cf.\ \citet[p.~292]{Bishop2006}),
RKHS are of particular interest in the field of mathematical learning
theory. In what follows, we introduce these spaces as well as notations,
which we use throughout this paper.

We consider a symmetric, positive definite kernel $k\colon\mathcal{X}\times\mathcal{X}\to\mathbb{R}$
defined on a compact set $\mathcal{X}$, which we refer to as \emph{design
space}. On the linear span $H_0 \coloneqq \operatorname{span} \big \{ k(\cdot , x) \colon x \in \mathcal{X} \big \}$
of differently located kernel functions, we define the inner product
\[
\left\langle \sum_{i=1}^{n}\alpha_{i}k(\cdot,x_{i}),\sum_{j=1}^{m}\beta_{j}k(\cdot,y_{j})\right\rangle _{k}\coloneqq\sum_{i,j=1}^{n,m}\alpha_{i}\beta_{j}k(x_{i},y_{i}),
\]
as well as the corresponding norm $\left\Vert f\right\Vert _{k}=\sqrt{\left\langle f,f\right\rangle _{k}}$.
The associated reproducing kernel Hilbert space $\mathcal{H}_{k}$
is the closure of $H_{0}$ with respect to the norm $\left\Vert \cdot\right\Vert _{k}$.
The name building property of $\mathcal{H}_{k}$ is the \emph{reproducing
property} 
\begin{equation}
\left\langle k(\cdot,x),f(\cdot)\right\rangle _{k}=f(x),\hspace{1em}f\in\mathcal{H}_{k},\label{eq:Reproducing Prop}
\end{equation}
which holds true for every $x\in\mathcal{X}$. In other words, a point
evaluation at $x\in\mathcal{X}$ is an inner product with the kernel
function $k_{x}(\cdot)\coloneqq k(\cdot,x)$. For this reason we refer
to $k_{x}$ as \emph{point evaluation function} at $x$. 

Closely linked to the kernel $k$ and occurring naturally in investigating
RKHS spaces is the operator $L_{k}\colon L^{2}(P)\to\mathcal{H}_{k}$
with 
\[
(L_{k}w)(y)=\int_{\mathcal{X}}w(x)k(y,x)p(x)dx
\]
and $w\in L^{2}(P)$. Here, $p\colon\mathcal{X}\to[0,\infty)$ is
a strictly positive density of the probability measure $P$, which
is called \emph{design measure. }$L_{k}$ is a self adjoint and positive
definite operator and therefore possesses a root $L_{k}^{\nicefrac{1}{2}}$
such that $L_{k}^{\nicefrac{1}{2}}\circ L_{k}^{\nicefrac{1}{2}}=L_{k}$.
This root relates the Lebesgue space $L^{2}(P)$ with $\mathcal{H}_{k}$
in the sense that $\mathcal{H}_{k}=$\emph{ }$L_{k}^{\nicefrac{1}{2}}(L^{2}(P))$
as well as $\left\Vert f\right\Vert _{k}=\left\Vert w\right\Vert _{L^{2}(P)}$,
where $f=L_{k}^{\nicefrac{1}{2}}w$. Similarly, it holds for the inner
product of $f,g\in\mathcal{H}_{k}$ that 
\begin{equation}
\left\langle f,g\right\rangle _{k}=\left\langle w,v\right\rangle _{L^{2}},\label{eq:normid}
\end{equation}
where $f=L_{k}^{\nicefrac{1}{2}}w$ and $g=L_{k}^{\nicefrac{1}{2}}v$.
In this work, $L_{k}$ constitutes the building block for analyzing
the asymptotic behavior of $\mathcal{N}_{\infty}(\lambda)$ in \eqref{eq:1}.
This quantity should not be confused with the \emph{effective dimension}
$\mathcal{N}(\lambda)\coloneqq\Tr((\lambda+L_{k})^{-1}L_{k})$, which
also frequently appears in the context of kernel ridge regression.
Both are connected through $\mathcal{N}_{x}(\lambda)=\left\langle k_{x},(\lambda+L_{k})^{-1}k_{x}\right\rangle _{k}$
as $\mathcal{N}(\lambda)=\E_{x}\mathcal{N}_{x}(\lambda)$ and $\mathcal{N}_{\infty}(\lambda)=\sup_{x\in\mathcal{X}}\mathcal{N}_{x}(\lambda)$
if $x\sim P$ (cf. \citet[p. 4]{NIPS2015_03e0704b}). 

In what follows, we utilize the notations introduced above to state
the main results in the subsequent section. 

\section{\label{sec:3} Main results}

The main result of this paper is an asymptotic bound on the (continuous)
maximal marginal degrees of freedom $\mathcal{N}_{\infty}(\lambda)$.
To this end, we consider radial kernel functions of the form 
\begin{equation}
k(x,y)=\phi\left(d^{-1}\left\Vert x-y\right\Vert _{2}^{2}\right),\label{eq:KernelShape}
\end{equation}
where $\phi$ is a smooth function on the unit interval $[0,1]$.
This class includes a variety of different kernels, which are of high
importance in practical applications. Among others, the Gaussian kernel,
the inverse multiquadric kernel as well as the class of Matern kernels
have a representation as in~\eqref{eq:KernelShape}. 
\begin{rem}
\label{rem:Term} The term maximal marginal degrees of freedom originates
from \citet{Bach2013}, where it was used for the maximal leverage
score of the kernel matrix. That is, 
\[
d_{n}(\lambda)\coloneqq\max_{i=1,\dots,n}n\left(K\left(K+n\lambda I\right)^{-1}\right)_{ii},
\]
where $K=(k(x_{i},x_{j}))_{i,j=1}^{n}$ denotes the kernel matrix
corresponding to the data set $\{x_{1},\dots,x_{n}\}$. This quantity
has the equivalent representation 
\begin{equation}
\max_{i=1,\dots,n}nK\left(K+n\lambda I\right)_{ii}^{-1}=\max_{i=1,\dots,n}\left\langle \left(\lambda+L_{k}^{D}\right)^{-1}k_{x_{i}},k_{x_{i}}\right\rangle _{k},\label{eq:MMDF}
\end{equation}
where $L_{k}^{D}\colon\mathcal{H}_{k}\to\mathcal{H}_{k}$ with $(L_{k}^{D}f)(y)=\frac{1}{n}\sum_{i=1}^{n}f(x_{i})k(y,x_{i})$
is the discrete version of the integral operator $L_{k}$. For $n\to\infty$,
the operator $L_{k}^{D}$ converges towards the integral operator
$L_{k}$ and, likewise, $d_{n}(\lambda)$ towards $\mathcal{N}_{\infty}(\lambda)$
in \eqref{eq:MMDF}. In other words, we consider a continuous version
of the original maximal marginal degrees of freedom and therefore
keep the terminology. 
\end{rem}

The main obstacle for analyzing the asymptotics of $\mathcal{N}_{\infty}(\lambda)$
is that the eigenvalues and eigenfunctions of $L_{k}$ are generally
not accessible. It is therefore necessary to involve a different characterization
of the maximal marginal degrees of freedom. That is, the max min characterization
\begin{equation}
\mathcal{N}_{\infty}(\lambda)=\sup_{x\in\mathcal{X}}\inf_{w\in L^{2}}\left\Vert w\right\Vert _{L^{2}}^{2}+\lambda^{-1}\left\Vert L_{k}w-k_{x}\right\Vert _{k}^{2},\label{eq:centralid}
\end{equation}
which follows, as $w_{\lambda}^{x}=(\lambda+L_{k})^{-1}k_{x}$ minimizes
the inner objective on the right hand side (cf.\ Lemma~\ref{lem:centralid}).
With this we may assess the asymptotic behavior of $\mathcal{N}_{\infty}(\lambda)$
without concrete knowledge of the spectral quantities of $L_{k}$.
To this end, we study the inner objective for a specific sequence
of weight functions~$w$ and derive a bound on its optimal value.
This sequence is chosen such that the norm of the weight functions
grows far more slowly than the approximation error $\left\Vert L_{k}w-k_{x}\right\Vert _{k}^{2}$
decays. The decay rate of this error primarily depends on smoothness
characteristics of the underlying kernel function~\eqref{eq:KernelShape},
which are determined by the outer function $\phi$. For the affiliated
function $\tilde{\phi}\colon[-1,1]\to\mathbb{R}$ with $\tilde{\phi}(x)=\phi\big(\frac{x+1}{2}\big)$,
we distinguish two general cases, which are specified by the following
assumptions from \citet[Theorem 2.2]{Wang2018} and \citet[Lemma 2]{Wang2021}:

\begin{assumption}\label{asuA}The functions $\tilde{\phi},\tilde{\phi}^{\prime},\dots,\tilde{\phi}^{(s-1)}$
are absolutely continuous with $s>3d$. Furthermore, $\tilde{\phi}^{(s)}$
is of bounded variation and 
\[
V_{s}\coloneqq\int_{-1}^{1}\tilde{\phi}^{(s+1)}(x)(1-x^{2})^{-\frac{1}{4}}<\infty.
\]

\end{assumption}

\begin{assumption}\label{asuB}The function $\ensuremath{\tilde{\phi}}$
is analytic in the region bounded by the ellipse $\ensuremath{\mathcal{E}_{\rho}}$
for some $\rho>1$. \end{assumption}

In what follows, we state the main result of this paper. It reveals
the precise bound on the maximal marginal degrees of freedom based
on the assumptions stated above. 
\begin{thm}[Bound on $\mathcal{N}_{\infty}(\lambda)$]
\label{thm:Main} Let $\mathcal{X}=[0,1]^{d}$ be equipped with a
design measure $P$, such that its density $p$ satisfies 
\[
0<p_{\min}\coloneqq\inf_{z\in\mathcal{X}}p(z)\le p_{\max}\coloneqq\sup_{z\in\mathcal{X}}p(z)<\infty.
\]
Further, let $k$ be a kernel of the form \eqref{eq:KernelShape}.
Then, provided $\lambda$ is sufficiently small, the maximal marginal
degrees of freedom fulfill the bound
\begin{equation}
\mathcal{N}_{\infty}(\lambda)\le C_{s}\lambda^{-\frac{2d}{s-d-\frac{1}{2}}},\label{eq:polbound}
\end{equation}
if Assumption~\ref{asuA} is satisfied. Further, it holds that 
\begin{equation}
\mathcal{N}_{\infty}(\lambda)\le C_{\rho}\ln(\lambda^{-1})^{2d},\label{eq:expbound}
\end{equation}
if Assumption~\ref{asuB} is fulfilled. $C_{s}$ and $C_{\rho}$
are constants depending on the density $p$, the dimension $d$ and
the outer function $\phi$.
\end{thm}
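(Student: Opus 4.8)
The plan is to work with the max--min characterization \eqref{eq:centralid} of $\mathcal N_\infty(\lambda)$ and to bound the inner infimum from above, uniformly in $x\in\mathcal X$, by inserting an explicit, deliberately suboptimal weight function. For a resolution parameter $\varepsilon>0$ I would use a localized weight such as the rescaled mollifier $w_x^\varepsilon(z)=\varepsilon^{-d}\,\eta\bigl((z-x)/\varepsilon\bigr)\big/p(z)$, where $\eta$ is a fixed compactly supported function with prescribed vanishing moments $\int\eta(u)\,u^\alpha\,du=\delta_{\alpha,0}$ for $|\alpha|\le m$; the order $m$ is chosen in terms of the smoothness budget $s$ (for $x$ near $\partial\mathcal X$ one replaces $\eta$ by a one--sided variant supported in $\mathcal X$, which still exists by a dimension count on polynomials). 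The moment conditions translate into $\int w_x^\varepsilon(z)(z-x)^\alpha p(z)\,dz=\delta_{\alpha,0}$ for $|\alpha|\le m$, so that $(L_kw_x^\varepsilon)(y)-k_x(y)$ is, for every $y$, a weighted Taylor remainder of order $m+1$ in $z$, while $\|w_x^\varepsilon\|_{L^2(P)}^2\le p_{\min}^{-1}\|\eta\|_2^2\,\varepsilon^{-d}$, i.e.\ the weight norm grows only polynomially in $\varepsilon^{-1}$.

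The heart of the argument is the estimate of the residual $e_x^\varepsilon\coloneqq L_kw_x^\varepsilon-k_x\in\mathcal H_k$ in the RKHS norm. Since the inclusion $\mathcal H_k\hookrightarrow L^2(P)$ is adjoint to $L_k$, the reproducing property yields the identity $\|e_x^\varepsilon\|_k^2=\langle w_x^\varepsilon,e_x^\varepsilon\rangle_{L^2(P)}-e_x^\varepsilon(x)$, which, using $\int w_x^\varepsilon p=1$ and the higher moment conditions, equals $\int w_x^\varepsilon(z)\bigl(e_x^\varepsilon(z)-P_m^x e_x^\varepsilon(z)\bigr)p(z)\,dz$ with $P_m^x$ the degree-$m$ Taylor polynomial of $e_x^\varepsilon$ at $x$; this is the key point, as it expresses the $\mathcal H_k$-norm of the error through a Taylor remainder without any reference to the (inaccessible) spectrum of $L_k$. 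Bounding this remainder on the ball of radius $\varepsilon$ needs derivatives of $e_x^\varepsilon$, which reduce to derivatives of $k(x,y)=\phi(d^{-1}\|x-y\|_2^2)$; the decay in $\varepsilon$ is precisely what Assumptions~\ref{asuA} and~\ref{asuB} quantify through $\tilde\phi$. Under Assumption~\ref{asuA}, the weighted finiteness condition $V_s<\infty$ on $\tilde\phi^{(s+1)}$ feeds into the corresponding estimate of \citet{Wang2018} and produces a bound $\|e_x^\varepsilon\|_k^2\lesssim\varepsilon^{\beta}$ with an explicit $\beta=\beta(s,d)$; under Assumption~\ref{asuB}, analyticity of $\tilde\phi$ in the Bernstein ellipse $\mathcal E_\rho$ yields, via the estimate of \citet{Wang2021}, a geometric bound $\|e_x^\varepsilon\|_k^2\lesssim\rho^{-c/\varepsilon}$ for some $c>0$. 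All of these bounds are uniform in $x\in\mathcal X$.

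It then remains to optimize the free parameter. In the polynomial case one balances $\|w_x^\varepsilon\|_{L^2(P)}^2\lesssim\varepsilon^{-d}$ against $\lambda^{-1}\varepsilon^{\beta}$: choosing $\varepsilon$ as the appropriate power of $\lambda$ minimizes the sum and, once $\beta$ is computed, gives the bound $C_s\lambda^{-2d/(s-d-\tfrac12)}$ of \eqref{eq:polbound}; the hypothesis that $\lambda$ be sufficiently small guarantees that the optimal $\varepsilon$ is small enough for the localized construction and the smoothness budget to apply. In the analytic case one balances $\varepsilon^{-d}$ against $\lambda^{-1}\rho^{-c/\varepsilon}$, whose minimizer is $\varepsilon\asymp 1/\ln(\lambda^{-1})$, and this produces the polylogarithmic bound $C_\rho\ln(\lambda^{-1})^{2d}$ of \eqref{eq:expbound}. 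Taking the supremum over $x$ is legitimate because every estimate above is uniform in $x$.

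I expect the main obstacle to be exactly the sharp, $x$-uniform estimate of $\|L_kw_x^\varepsilon-k_x\|_k$ with the correct exponent: one must control how the smoothness of the outer function $\phi$ propagates, through the composition with the quadratic $d^{-1}\|\cdot\|_2^2$ and through the integral operator $L_k$, into the $\mathcal H_k$-norm (rather than a weaker $L^2$- or sup-norm) of the residual, and match this with the constants in the Wang-type results while correctly handling the boundary of $\mathcal X$ --- the $-\tfrac12$ and the factor $2$ appearing in the exponent of \eqref{eq:polbound} are artifacts of precisely this bookkeeping. The remaining ingredients --- the moment construction of $\eta$ together with its boundary variants, the elementary weight-norm bound, and the one-dimensional optimization over $\varepsilon$ --- are routine.
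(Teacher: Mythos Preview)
Your overall strategy coincides with the paper's: exploit the max--min identity~\eqref{eq:centralid}, insert an explicit suboptimal weight, and pass from a pointwise residual to the $\mathcal H_k$-norm via the identity $\|e_x\|_k^2=\langle w,e_x\rangle_{L^2(P)}-e_x(x)$ (which the paper also uses, cf.~\eqref{eq:HkUniform}). The construction of the weight, however, is genuinely different. The paper does \emph{not} localize: it takes the global tensor--Legendre weight $W_m^x(z)=p(z)^{-1}\prod_{i}\sum_{\ell<m}Q_\ell(x_i)Q_\ell(z_i)$, whose single free parameter is the polynomial degree $m$, with $\|W_m^x\|_{L^2(P)}^2\le c_p m^{2d}$. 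The residual $L_kW_m^x-k_x$ is then controlled by expanding $\phi$ itself in Legendre polynomials, so that the pointwise error is the Legendre series remainder $\mathcal E_\phi\bigl(\lfloor(m-1)/2\rfloor\bigr)$ of~$\phi$. This is where the Wang estimates enter: they bound the Legendre coefficients of $\tilde\phi$ under Assumptions~\ref{asuA} and~\ref{asuB}, hence $\mathcal E_\phi$. They are \emph{not} statements about Taylor remainders of a mollified kernel, so invoking them in your local construction is misplaced; your Taylor route under Assumption~\ref{asuA} would instead use directly that $\tilde\phi^{(s)}\in\mathrm{BV}\subset L^\infty$ and does not need $V_s<\infty$ at all.

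The genuine gap is Assumption~\ref{asuB}. With a \emph{fixed} profile $\eta$ (hence a fixed moment order $m$), the Taylor remainder of $e_x^\varepsilon$ on the $\varepsilon$-ball is at best $O(\varepsilon^{m+1})$, no matter how analytic $\phi$ is; you therefore get $\|e_x^\varepsilon\|_k^2\lesssim\varepsilon^{m+1}$, not the claimed $\rho^{-c/\varepsilon}$. Balancing $\varepsilon^{-d}$ against $\lambda^{-1}\varepsilon^{m+1}$ then gives $\lambda^{-d/(m+1+d)}$ for every fixed $m$, which is polynomial and never reaches the polylogarithmic bound~\eqref{eq:expbound}. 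To obtain geometric decay of the error one must let $m\to\infty$ with $\lambda$, but then $\eta=\eta_m$ must satisfy $\binom{m+d}{d}$ moment constraints and its $L^2$-norm grows (like $m^d$, by a Christoffel-function argument), so your bound $\|w_x^\varepsilon\|_{L^2(P)}^2\le p_{\min}^{-1}\|\eta\|_2^2\,\varepsilon^{-d}$ is no longer $\asymp\varepsilon^{-d}$. This is precisely why the paper's global Legendre weight is the right object: it carries the degree $m$ as the only scale, tracks the $m^{2d}$ growth of the weight norm explicitly, and pairs it with the geometric Legendre-remainder bound from Wang to produce $(\ln\lambda^{-1})^{2d}$ after optimizing in~$m$.
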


Section~\ref{sec:5} below provides the proof of this theorem and
the following results as well.
\begin{rem}
The results may be extended to more general rectangular domains of
the shape $\mathcal{X}=[a_{1},b_{1}]\times\dots\times[a_{d},b_{d}]$,
where $a_{i}<b_{i}$ for all $i=1,\dots,n$. This setting requires
an adjusted approach, which we outline in the Remarks \ref{rem:Gen1}
and \ref{rem:Gen2}. 
\end{rem}

\begin{rem}
For a data set $(x_{1},y_{1}),\dots,(x_{n},y_{n})$ one may consider
the discrete version
\begin{equation}
\max_{x\in\mathcal{X}}\min_{w\in\mathbb{R}^{n}}\lambda\left\Vert w\right\Vert _{2}^{2}+\left\Vert \sum_{i=1}^{n}w_{i}k(\cdot,x_{i})-k_{x}\right\Vert _{k}^{2}\label{eq:discrcentralobj}
\end{equation}
of the optimization problem \eqref{eq:centralid}. This problem has
a nice interpretation in the context of kernel ridge regression. It
is easy to see that kernel ridge regression predicts the function
value at a new point $x$ by $w_{\lambda}^{x^{\top}}y$, where $y=(y_{1},\dots,y_{n})$
is the vector containing the observations and $w_{\lambda}^{x^{\top}}$
the solution of~\eqref{eq:discrcentralobj}. Moreover, the error
of the prediction has the decomposition 
\[
\left|f_{0}(x)-w_{\lambda}^{x\top}y\right|\le\left|\sum_{i=1}^{n}(f_{0}(x_{i})-y_{i})w_{\lambda,i}^{x}\right|+\left|\left\langle f_{0},k_{x}-\sum_{i=1}^{n}w_{\lambda,i}^{x}k(\cdot,x_{i})\right\rangle _{k}\right|,
\]
where $f_{0}$ denotes the underlying regression function. Hence,
the error $\left\Vert \sum_{i=1}^{n}w_{i}k(\cdot,x_{i})-k_{x}\right\Vert _{k}^{2}$
controls the accuracy of the prediction at a new point $x\in\mathcal{X}$,
while the regularization term $\lambda\left\Vert w\right\Vert _{2}^{2}$
controls the influence of the noise $(y_{i}-f_{0}(x_{i}))$ for $i=1,\dots,n$.
In other words, the optimization problem \eqref{eq:discrcentralobj}
considers the trade-off between an accurate and a stable prediction
for the worst case in the design space $\mathcal{X}$. The regularization
parameter determines the leverage of each of the single objectives. 

The bounds on $\mathcal{N}_{\infty}(\lambda)$ justify a variety of
low rank kernel methods from a theoretical point of view. These methods
rely on an approximation of the kernel matrix, which is computationally
more efficient than utilizing the true kernel matrix. Indeed, employing
low rank approximations is superior in both, memory allocation and
computation time. The most common representative is the Nyström-approximation,
based on which several methods have been developed. This includes
the Nyström method itself (cf.\ \citet{WlliamsSeeger2000}), Nyström
kernel PCA (cf.\ \citet{Sriperumbudur2021}) or the Nyström Sinkhorn
algorithm (cf.\ \citet{altschuler2019massively}). The state of the
art result for the Nyström method (cf.\ \citet{NIPS2015_03e0704b})
demands the rank of the approximation to be at least linear in $\mathcal{N}_{\infty}(\lambda)$,
for which Theorem~\ref{thm:Main} provides an asymptotic bound. 

With that, we now refine the result based on the assumptions in \citet{NIPS2015_03e0704b}
and provide an asymptotic lower bound on the required rank as follows. 
\end{rem}

\begin{prop}[Nyström method]
 Let $\mathcal{E}(f)\coloneqq\iint_{\mathcal{X}\times\mathbb{R}}\bigl(f(x)-y\bigr)^{2}\rho(dx,dy)$
be the common error function and assume that the assumptions of Theorem~1
in \citep{NIPS2015_03e0704b} and Theorem~\ref{thm:Main} are fulfilled.
With probability at least $1-\delta$, the Nyström-approximation $\hat{f}_{\lambda,m}$
satisfies 
\begin{equation}
\mathcal{E}(\hat{f}_{\lambda,m})-\mathcal{E}(f_{\mathcal{H}})\le q^{2}n^{\frac{2\nu+1}{2nu+\gamma+1}}\label{eq:Nystr=0000F6mBound}
\end{equation}
for at least
\begin{equation}
m\ge\left(67\lor5C_{s}\lambda^{-\frac{2d}{s-d-\frac{1}{2}}}\right)\ln\frac{12\kappa^{2}}{\lambda\delta}\label{eq:Nystr=0000F6mPol}
\end{equation}
supporting points, if~\ref{asuA} is fulfilled. Moreover, if~\ref{asuB}
is fulfilled, the assertion holds for
\begin{equation}
m\ge\left(67\lor5C_{\rho}\ln^{2d}(\lambda^{-1})\right)\ln\frac{12\kappa^{2}}{\lambda\delta}\label{eq:Nystr=0000F6mExp}
\end{equation}
supporting points. The reference provides $\lambda$ as well as the
constants $\kappa$, $q$ and $\nu$ explicitly.
\end{prop}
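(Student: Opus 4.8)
The plan is to obtain the assertion as a corollary of Theorem~\ref{thm:Main} together with the statistical guarantee for the Nystr\"om method established in \citet[Theorem~1]{NIPS2015_03e0704b}. Under the hypotheses assumed in the statement, that result yields precisely the bound \eqref{eq:Nystr=0000F6mBound} with probability at least $1-\delta$, but only once the number of Nystr\"om centers meets a lower bound of the form
\[
m\ge\bigl(67\lor5\,\mathcal{N}_{\infty}(\lambda)\bigr)\ln\frac{12\kappa^{2}}{\lambda\delta},
\]
with $\lambda$, $\kappa$, $q$ and $\nu$ as specified there. The maximal marginal degrees of freedom $\mathcal{N}_{\infty}(\lambda)$ from \eqref{eq:1} is the only implicit ingredient of this condition, and quantifying it is exactly the content of Theorem~\ref{thm:Main}; the two conditions \eqref{eq:Nystr=0000F6mPol} and \eqref{eq:Nystr=0000F6mExp} then arise simply by substituting the respective bound from Theorem~\ref{thm:Main} into the displayed center condition.

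Concretely, I would first invoke Theorem~\ref{thm:Main}: the regularization parameter prescribed in \citet{NIPS2015_03e0704b} is tied to the sample size and tends to zero as $n\to\infty$, so for $n$ large enough it lies in the regime ``$\lambda$ sufficiently small'' required there, whence $\mathcal{N}_{\infty}(\lambda)\le C_{s}\lambda^{-\frac{2d}{s-d-\frac{1}{2}}}$ under Assumption~\ref{asuA} and $\mathcal{N}_{\infty}(\lambda)\le C_{\rho}\ln^{2d}(\lambda^{-1})$ under Assumption~\ref{asuB}. Second, since $t\mapsto 67\lor 5t$ is nondecreasing, I would infer
\[
\bigl(67\lor5\,\mathcal{N}_{\infty}(\lambda)\bigr)\ln\frac{12\kappa^{2}}{\lambda\delta}\le\Bigl(67\lor5C_{s}\lambda^{-\frac{2d}{s-d-\frac{1}{2}}}\Bigr)\ln\frac{12\kappa^{2}}{\lambda\delta},
\]
and the analogous inequality with $C_{\rho}\ln^{2d}(\lambda^{-1})$ in place of the polynomial bound. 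Consequently, any $m$ satisfying \eqref{eq:Nystr=0000F6mPol}, respectively \eqref{eq:Nystr=0000F6mExp}, also satisfies the center condition of \citet[Theorem~1]{NIPS2015_03e0704b}. Third, I would quote that theorem verbatim to conclude \eqref{eq:Nystr=0000F6mBound}.

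Since all remaining hypotheses of \citet[Theorem~1]{NIPS2015_03e0704b} are assumed outright, there is no genuinely hard step: the argument is an application of Theorem~\ref{thm:Main} followed by a monotone substitution into an existing bound. The only points that deserve care lie at the interface of the two results. One has to check that the admissible range of $\lambda$ used in \citet{NIPS2015_03e0704b} is compatible with the smallness of $\lambda$ demanded by Theorem~\ref{thm:Main}, which is automatic for all sufficiently large $n$, and that the quantity denoted $\mathcal{N}_{\infty}(\lambda)$ in the reference indeed coincides with the operator-theoretic object \eqref{eq:1} to which Theorem~\ref{thm:Main} applies, rather than an empirical surrogate such as $d_{n}(\lambda)$ in Remark~\ref{rem:Term}. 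Once these identifications are settled, the monotonicity step and the final invocation are immediate, and the values of $\kappa$, $q$ and $\nu$ (hence the exponent on the right-hand side of \eqref{eq:Nystr=0000F6mBound}) are inherited directly from the reference.
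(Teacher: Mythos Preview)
Your proposal is correct and matches the paper's approach: the proposition is stated as an immediate corollary of Theorem~\ref{thm:Main} combined with \citet[Theorem~1]{NIPS2015_03e0704b}, and the paper does not supply any additional argument beyond this substitution. Your observations about checking that the regularization regime of the reference eventually falls into the ``$\lambda$ sufficiently small'' range of Theorem~\ref{thm:Main}, and that the quantity $\mathcal{N}_{\infty}(\lambda)$ in both sources is the same operator-theoretic object~\eqref{eq:1}, are appropriate sanity checks that the paper leaves implicit.
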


Building on the bound \eqref{eq:Nystr=0000F6mPol} for the amount
of Nyström centers, we may provide an explicit bound on its complexity
as well. In the standard way, the complexity of the Nyström method
is of order $\mathcal{O}(n\cdot m^{2})$ (cf.\ \citet{Bach2013}).
Assuming that the regularization parameter is chosen as $\lambda=cn^{-1}$
(which is the lower bound for $\lambda$ in \citep{NIPS2015_03e0704b}),
the amount of required samples is 
\[
m\ge\left(67\lor5C_{s}c^{-\frac{2d}{s-d-\frac{1}{2}}}n^{\frac{2d}{s-d}}\right)\ln\frac{12\kappa^{2}n}{c\delta}.
\]
With that, we find the order of complexity 
\[
\mathcal{O}(n\cdot m^{2})=\mathcal{O}\bigl(n^{1+\frac{4d}{s-d-\frac{1}{2}}}\ln(n)\bigr)
\]
in terms of the sample size $n$. For smooth kernels, where $s$ is
very large, the latter term $n^{\frac{4d}{s-d}}\ln(n)$ grows very
slowly and the method performs in almost linear time. 

Besides its application for the Nyström method, Theorem~\eqref{thm:Main}
is of interest for proper choices of the regularization parameter
as well. A typical requirement on $\lambda$ is that 
\begin{equation}
\left\Vert (\lambda+L_{k})^{-\nicefrac{1}{2}}(L_{k}-L_{k}^{D})(\lambda+L_{k})^{-\nicefrac{1}{2}}\right\Vert \label{eq:opnorm}
\end{equation}
is (with high probability) bounded by some constant, which is independent
of $n$ and $\lambda$ (see, for example, \citet{steinwartfischer2020}
and \citet{Caponnetto2006}). Here, $L_{k}^{D}$ denotes the operator
defined in Remark~\ref{rem:Term}. It turns out that this is the
case, whenever $\mathcal{N}_{\infty}(\lambda)$ is proportional to
$n$ (cf.\ \citet[Proposition 8]{NIPS2015_03e0704b}). Connecting
this characterization with the bounds~\eqref{eq:polbound} and~\eqref{eq:expbound}
yields the following result. 
\begin{prop}
\label{prop:Regularization} Let $x_{1}\dots,x_{n}\sim P$ be independent
identically distributed samples and $L_{k}^{D}$ as in Remark~\ref{rem:Term}
and suppose the assumptions in Theorem~\ref{thm:Main} are satisfied.
Then, for any $\delta>0$, it holds with probability $1-2\delta$
that
\begin{equation}
\left\Vert (\lambda+L_{k})^{-\nicefrac{1}{2}}(L_{k}-L_{k}^{D})(\lambda+L_{k})^{-\nicefrac{1}{2}}\right\Vert \le\frac{2\beta(\lambda)(1+C_{s}\lambda^{-\frac{2d}{s-d-\frac{1}{2}}})}{3n}+\sqrt{\frac{2\beta(\lambda)(1+C_{s}\lambda^{-\frac{2d}{s-d}})}{n}},\label{eq:regpol}
\end{equation}
if Assumption~\ref{asuA} is satisfied, and 
\begin{equation}
\left\Vert (\lambda+L_{k})^{-\nicefrac{1}{2}}(L_{k}-L_{k}^{D})(\lambda+L_{k})^{-\nicefrac{1}{2}}\right\Vert \le\frac{2\beta(\lambda)(1+C_{\rho}\ln(\lambda^{-1})^{2d})}{3n}+\sqrt{\frac{2\beta(\lambda)(1+C_{\rho}\ln(\lambda^{-1})^{2d}}{n}},\label{eq:reganalyt}
\end{equation}
if Assumption~\ref{asuB} holds true. Here, $\beta(\lambda)\coloneqq\ln\big(\frac{4\Tr(L_{k})}{\lambda\delta}\big)$.
\end{prop}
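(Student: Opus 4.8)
The plan is to deduce the statement from \citet[Proposition~8]{NIPS2015_03e0704b}, which controls the operator norm in \eqref{eq:opnorm} in terms of the maximal marginal degrees of freedom $\mathcal{N}_{\infty}(\lambda)$, and to then substitute the explicit bounds \eqref{eq:polbound} and \eqref{eq:expbound} from Theorem~\ref{thm:Main}. Concretely, I would first recall (or re-derive) the underlying concentration estimate and afterwards plug in the bounds on $\mathcal{N}_{\infty}(\lambda)$.

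For the concentration step, write the integral operator and its empirical version as $L_{k}=\E[k_{x}\otimes k_{x}]$ and $L_{k}^{D}=\frac{1}{n}\sum_{i=1}^{n}k_{x_{i}}\otimes k_{x_{i}}$, both regarded as operators on $\mathcal{H}_{k}$, and set
\[
A\coloneqq(\lambda+L_{k})^{-\nicefrac{1}{2}}L_{k}(\lambda+L_{k})^{-\nicefrac{1}{2}},\qquad A_{i}\coloneqq(\lambda+L_{k})^{-\nicefrac{1}{2}}\bigl(k_{x_{i}}\otimes k_{x_{i}}\bigr)(\lambda+L_{k})^{-\nicefrac{1}{2}},
\]
so that the operator in \eqref{eq:opnorm} equals $\frac{1}{n}\sum_{i=1}^{n}(A-A_{i})$, a normalized sum of i.i.d.\ centered self-adjoint operators with $\E A_{i}=A$. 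By the reproducing property $A_{i}=u_{i}\otimes u_{i}$ with $u_{i}\coloneqq(\lambda+L_{k})^{-\nicefrac{1}{2}}k_{x_{i}}$, so that $\lVert A_{i}\rVert=\lVert u_{i}\rVert_{k}^{2}=\langle k_{x_{i}},(\lambda+L_{k})^{-1}k_{x_{i}}\rangle_{k}\le\mathcal{N}_{\infty}(\lambda)$ and $\lVert A\rVert\le1$. Hence each summand obeys $\lVert A-A_{i}\rVert\le1+\mathcal{N}_{\infty}(\lambda)$, while $A_{i}^{2}=\lVert u_{i}\rVert_{k}^{2}A_{i}\preceq\mathcal{N}_{\infty}(\lambda)A_{i}$ yields $\E A_{i}^{2}\preceq\mathcal{N}_{\infty}(\lambda)A$ and therefore the variance bound $\lVert\E(A-A_{i})^{2}\rVert=\lVert\E A_{i}^{2}-A^{2}\rVert\le\lVert\E A_{i}^{2}\rVert\le\mathcal{N}_{\infty}(\lambda)$. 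Feeding these two quantities, together with the intrinsic dimension $\Tr\bigl((\lambda+L_{k})^{-1}L_{k}\bigr)\le\lambda^{-1}\Tr(L_{k})$, into the dimension-free Bernstein inequality for self-adjoint operators — applied to $\pm\tfrac{1}{n}\sum_{i}(A-A_{i})$ and combined by a union bound over the two one-sided events — produces, with probability at least $1-2\delta$,
\[
\bigl\lVert(\lambda+L_{k})^{-\nicefrac{1}{2}}(L_{k}-L_{k}^{D})(\lambda+L_{k})^{-\nicefrac{1}{2}}\bigr\rVert\le\frac{2\beta(\lambda)\bigl(1+\mathcal{N}_{\infty}(\lambda)\bigr)}{3n}+\sqrt{\frac{2\beta(\lambda)\bigl(1+\mathcal{N}_{\infty}(\lambda)\bigr)}{n}},
\]
with $\beta(\lambda)=\ln\bigl(4\Tr(L_{k})/(\lambda\delta)\bigr)$; this is exactly the content of \citet[Proposition~8]{NIPS2015_03e0704b}.

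It then remains to insert Theorem~\ref{thm:Main}, whose regime ``$\lambda$ sufficiently small'' is precisely the one in force here: under Assumption~\ref{asuA} one replaces $\mathcal{N}_{\infty}(\lambda)$ by $C_{s}\lambda^{-\frac{2d}{s-d-\frac{1}{2}}}$ in both terms, giving \eqref{eq:regpol}, and under Assumption~\ref{asuB} by $C_{\rho}\ln(\lambda^{-1})^{2d}$, giving \eqref{eq:reganalyt}. I expect no genuine obstacle here: the only real work is bookkeeping — matching the numerical constants ($2$ and $\tfrac{2}{3}$) and the intrinsic-dimension logarithm to the stated $\beta(\lambda)$, and checking that the hypotheses under which \citet[Proposition~8]{NIPS2015_03e0704b} is formulated (in particular the admissible range of $\lambda$) hold. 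Once the concentration estimate is available in the displayed form, the proposition follows by pure substitution.
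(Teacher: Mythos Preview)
Your proposal is correct and follows exactly the route the paper indicates: invoke \citet[Proposition~8]{NIPS2015_03e0704b} to obtain the concentration bound in terms of $1+\mathcal{N}_{\infty}(\lambda)$, then substitute the estimates \eqref{eq:polbound} and \eqref{eq:expbound} from Theorem~\ref{thm:Main}. The paper itself gives no separate proof beyond this sentence of explanation, so your additional sketch of the operator-Bernstein argument behind Proposition~8 is more detailed than what the paper provides, but entirely in line with it.
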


\begin{rem}
Common concentration results as in \citet{Caponnetto2006}, \citet{eberts11_neurips}
and \citet{steinwartfischer2020} require the underlying regularization
series to decay not faster than $\mathcal{O}(n^{-1})$. This is too
restrictive for many tasks including the analysis of low rank kernel
methods. The new bound in Proposition~\ref{prop:Regularization}
allows more general regularization schemes beyond $\mathcal{O}(n^{-1})$.
For example, provided the kernel satisfies Assumption~\ref{asuB},
the right hand side \eqref{eq:reganalyt} tends to 0 for every regularization
sequence decaying not faster than $\exp(n^{-\frac{1}{2d+1}})$. This
is a significant extension of previously feasible choices. 
\end{rem}

\section{\label{sec:4} Summary and future research}

Low rank approximations of kernel methods are an important tool to
reduce the computation time and memory allocation for large scale
learning problems. The Nyström method has been proposed in the literature
to overcome these difficulties. Its required rank is linear in the
maximal marginal degrees of freedom.

The paper provides an explicit bound on the maximal marginal degrees
of freedom, which mainly involves the regularity of the underlying
kernel. This result has a variety of crucial implications. Most importantly,
it is the foundation for analyzing the asymptotic complexity of the
standard Nyström method, while maintaining a stable prediction. We
demonstrate that this complexity is almost linear in the sample size,
provided that the kernel is sufficiently smooth. Even more, the approach
justifies regularization sequences, which are significantly smaller
than established in the literature before. 

A possible direction for future research is the extension of the underlying
approach towards more broad settings. It is based on scaled Legendre
polynomials and works well on cubes (and hyperrectangles), but does
not plainly extend to more general domains. 

\section{\label{sec:5}Proofs}

In this section we provide the proof of Theorem~\ref{thm:Main},
which includes several different steps. The central idea is analyzing
the optimization problem on the right hand side of \eqref{eq:Intro}
and then inferring the bound on $\mathcal{N}_{\infty}(\lambda)$. 

To this end, we construct a specific sequence $(w_{m}^{x})_{m=0}^{\infty}$
of functions and study the associated objective value. These functions
are a linear combination of scaled Legendre polynomials with coefficients
depending on the location $x\in\mathcal{X}$. We demonstrate that
$w_{m}^{x}$ satisfies a set of $m$ moment conditions and provide
a bound on its $L^{2}$\textendash norm, which is uniform in $x\in\mathcal{X}$.
Then, we utilize the specific structure \eqref{eq:KernelShape} of
the kernel as well as the moment conditions, to verify that the approximation
error $\left\Vert L_{k}w_{m}^{x}-k_{x}\right\Vert $ is bounded by
the Legendre series remainder of the outer function $\phi$. We further
connect imposed the smoothness conditions with the decay of the series
reminder to derive an explicit bound on the approximation error. Building
on this result, we provide a bound on the objective value from which
we finally conclude the bound on the maximal marginal degrees of freedom.

\subsection{The moment function\label{subsec:The-moment-function}}

In this section we introduce the central weight function $w_{m}^{x}$,
for which the image $L_{k}w_{m}^{x}$ is a suitable approximation
of the point evaluation function $k_{x}$. Its characterizing properties
are the moment conditions 
\begin{equation}
\int_{0}^{1}w_{m}^{x}(z)z^{k}dz=x^{k},\label{eq:MomProp}
\end{equation}
which hold true for all $x\in[0,1]$ and for every $k=0,\dots,m-1$.
The construction of this function involves the\emph{ scaled (and shifted)
Legendre} polynomials
\begin{equation}
Q_{k}(x)\coloneqq\sqrt{2k+1}P_{k}(2x-1)\hspace{1em}k=0,1,2,\dots,\label{eq:LegendreDerivativeRep}
\end{equation}
where $P_{k}$ are the standard Legendre polynomials (cf. \citet[p. 774]{abramowitz+stegun}).
We list some major features of these polynomials first and derive
$w_{m}^{x}$ subsequently.
\begin{prop}[Properties $Q_{k}$]
\label{prop:LegendreProp} The following statements holds true:
\end{prop}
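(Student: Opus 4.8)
The plan is to collect the standard analytic facts about the scaled Legendre polynomials $Q_k$ that will be needed in the later construction of $w_m^x$ and in the error estimates. I would state and prove (at least) the following items: (i) orthonormality on $[0,1]$, i.e. $\int_0^1 Q_j(z)Q_k(z)\,dz=\delta_{jk}$, which follows directly from the normalization $\int_{-1}^1 P_k(t)^2\,dt=\tfrac{2}{2k+1}$ together with the affine substitution $t=2z-1$; (ii) the fact that $\{Q_0,\dots,Q_{m-1}\}$ is a basis of the space of polynomials of degree $\le m-1$, so that any polynomial — in particular $z\mapsto x^k$ for fixed $x$ and $k\le m-1$ — has a finite Legendre expansion; (iii) a uniform sup-norm bound $\|Q_k\|_{\infty,[0,1]} = |Q_k(1)| = \sqrt{2k+1}$, using the classical normalization $P_k(1)=1$ and the fact that $P_k$ attains its maximum modulus on $[-1,1]$ at the endpoints; and (iv) the Legendre coefficient/remainder bounds for a smooth outer function, i.e.\ control of $\bigl|\int_{-1}^1\tilde\phi(t)P_k(t)\,dt\bigr|$ in terms of $V_s$ under Assumption~\ref{asuA} and geometrically in $\rho$ under Assumption~\ref{asuB}, quoting \citet[Theorem 2.2]{Wang2018} and \citet[Lemma 2]{Wang2021} and rescaling to $[0,1]$.

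The order I would carry this out in: first establish orthonormality and the basis property, since these are pure linear algebra / change of variables and underpin everything; then derive the endpoint evaluation $Q_k(1)=\sqrt{2k+1}$ and the sup-norm bound, which only needs $P_k(1)=1$ and the monotonicity of $|P_k|$ at the ends of $[-1,1]$; then record the three-term recurrence and, if needed, the derivative identity for $P_k$, transported to the $Q_k$ via the chain rule (useful if the moment function $w_m^x$ is later differentiated or integrated against the kernel). Finally I would state the decay estimates for the scaled Legendre coefficients of $\phi$, since these are exactly the inputs that convert the smoothness hypotheses \ref{asuA}/\ref{asuB} into the polynomial versus logarithmic bounds in Theorem~\ref{thm:Main}. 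Each of these is essentially a citation plus a substitution, so the proof should be short.

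The only genuinely delicate point is the sup-norm control: one wants a bound of the form $\|Q_k\|_{\infty,[0,1]}\le\sqrt{2k+1}$ (equality at the endpoint), and the justification that $\max_{t\in[-1,1]}|P_k(t)|=1$ is attained at $t=\pm1$ is a classical fact (it follows, e.g., from the representation $P_k(\cos\theta)=\tfrac{1}{\pi}\int_0^\pi(\cos\theta+i\sin\theta\cos\varphi)^k\,d\varphi$, whence $|P_k(\cos\theta)|\le\tfrac1\pi\int_0^\pi(\cos^2\theta+\sin^2\theta\cos^2\varphi)^{k/2}\,d\varphi\le1$, with equality only at $\theta\in\{0,\pi\}$). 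I would cite this rather than reprove it. Everything else — orthonormality, the basis property, the endpoint value, the recurrence, and the coefficient decay — is routine given the references already invoked in the excerpt, so I do not expect any real obstacle there; the work in this proposition is bookkeeping, and the substance is deferred to the construction of $w_m^x$ and the error analysis in the subsequent subsections.
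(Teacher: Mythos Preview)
Your proposal is correct in content but substantially over-scoped relative to what the paper actually places under this proposition. In the paper, Proposition~\ref{prop:LegendreProp} contains exactly two items: the orthonormality relation $\int_0^1 Q_iQ_j\,dx=\delta_{i,j}$ and the sup-norm bound $|Q_k(x)|\le\sqrt{2k+1}$. Your items (i) and (iii) match these precisely, and your sketched arguments (the affine substitution $t=2x-1$ together with $\int_{-1}^1 P_k^2=\tfrac{2}{2k+1}$, and the Laplace integral representation giving $\max_{[-1,1]}|P_k|=P_k(1)=1$) are correct and in fact more informative than the paper's proof, which is a one-line citation to \citet[p.~94]{Shen2011} plus the defining relation~\eqref{eq:LegendreDerivativeRep}.

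Your items (ii) and (iv), however, do not belong here in the paper's organization. The basis property is never stated as a separate proposition; it is used implicitly in the proof of the moment identity~\eqref{eq:MomProp}. The Legendre coefficient decay bounds under Assumptions~\ref{asuA} and~\ref{asuB} are deferred to a separate, unnumbered proposition in Section~5.5, where the paper quotes \citet{Wang2018,Wang2021} and rescales to $[0,1]$ exactly as you describe. So nothing you wrote is wrong, but if you are matching the paper's structure you should trim this proposition down to orthonormality and the pointwise bound, and move the coefficient-decay discussion to where the error function $\mathcal{E}_\phi$ is analyzed.
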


\begin{enumerate}
\item \label{enu:OrthonormalShif}$\left\{ Q_{k}\right\} _{k=0}^{\infty}$
builds an orthonormal system on $[0,1]$, i.e., 
\begin{equation}
\int_{0}^{1}Q_{i}(x)Q_{j}(x)dx=\delta_{i,j}\label{eq:OrthoPoly}
\end{equation}
for $i,j=0,1,2,\dots$.
\item \label{enu:LegBound} It holds $|Q_{k}(x)|\le\sqrt{2k+1}$ for every
$x\in[0,1]$ and $k=0,1,2,\dots$.
\end{enumerate}
\begin{proof}
The assertion follows from \citet[p. 94]{Shen2011} as well as the
identity \eqref{eq:LegendreDerivativeRep}. 
\end{proof}
Building on the scaled Legendre polynomials, we now define the function,
which is crucial for the approximation of the point evaluation function
$k_{x}$. For some integer $m\in\mathbb{N}$, consider $w_{m}^{x}\colon[0,1]\to\mathbb{R}$
given by

\begin{equation}
w_{m}^{x}(z)=\sum_{\ell=0}^{m-1}Q_{\ell}(x)Q_{\ell}(z),\label{eq:wmx}
\end{equation}
where $x$ is a fixed point in $[0,1]$ and $Q_{\ell}$ from \eqref{eq:LegendreDerivativeRep}.
As $\{Q_{\ell}\}_{\ell=0}^{\infty}$ is a orthonormal basis on $[0,1]$,
it is evident that $(L_{k}w_{m}^{x})(y)\to k_{x}(y)$ for every $y\in[0,1]$
as $m\to\infty$. To determine the the precise approximation quality
the following property of $w_{m}^{x}$ crucial. 
\begin{prop}
The function $w_{m}^{x}$ in \eqref{eq:wmx} has the moment property
\eqref{eq:MomProp} for every $x\in[0,1]$.
\end{prop}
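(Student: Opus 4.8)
The plan is to verify the moment identity \eqref{eq:MomProp} directly from the definition \eqref{eq:wmx} by expanding the monomial $z^k$ in the orthonormal Legendre basis. Fix $x \in [0,1]$ and $k \in \{0,1,\dots,m-1\}$. First I would write
\[
\int_0^1 w_m^x(z)\, z^k \, dz = \sum_{\ell=0}^{m-1} Q_\ell(x) \int_0^1 Q_\ell(z)\, z^k \, dz,
\]
which is legitimate since the sum is finite. The key observation is that $z^k$ is a polynomial of degree $k \le m-1$, hence it lies in the span of $Q_0, \dots, Q_{m-1}$; write $z^k = \sum_{j=0}^{k} c_j Q_j(z)$ for suitable coefficients $c_j$. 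By the orthonormality relation \eqref{eq:OrthoPoly}, $\int_0^1 Q_\ell(z)\, z^k \, dz = c_\ell$ for $\ell \le k$ and $=0$ for $\ell > k$. Substituting back,
\[
\int_0^1 w_m^x(z)\, z^k \, dz = \sum_{\ell=0}^{k} Q_\ell(x)\, c_\ell = \sum_{\ell=0}^{m-1} c_\ell Q_\ell(x),
\]
where the last equality uses that $c_\ell = 0$ for $k < \ell \le m-1$. But $\sum_{\ell=0}^{m-1} c_\ell Q_\ell(x)$ is exactly the Legendre expansion of the polynomial $z^k$ evaluated at $z = x$, i.e.\ it equals $x^k$. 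This yields \eqref{eq:MomProp}.

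An alternative, perhaps cleaner, formulation avoids naming the coefficients $c_j$ explicitly: the operator $z \mapsto \sum_{\ell=0}^{m-1} Q_\ell(z) \int_0^1 Q_\ell(z') f(z')\, dz'$ is the orthogonal projection in $L^2([0,1])$ onto the space $\mathcal{P}_{m-1}$ of polynomials of degree at most $m-1$. Applying this to $f(z') = (z')^k$ with $k \le m-1$ leaves $f$ invariant, so $\sum_{\ell=0}^{m-1} Q_\ell(z) \int_0^1 Q_\ell(z')\, (z')^k\, dz' = z^k$ as an identity of polynomials in $z$. Evaluating both sides at $z = x$ and then recognizing that the left-hand side, after swapping the roles of the two variables by symmetry of the summand $Q_\ell(x)Q_\ell(z)$, is precisely $\int_0^1 w_m^x(z)\, z^k\, dz$, delivers the claim in one line.

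There is no real obstacle here; the statement is essentially a restatement of the reproducing property of the Christoffel--Darboux-type kernel $\sum_{\ell=0}^{m-1} Q_\ell(x) Q_\ell(z)$ on the polynomial subspace $\mathcal{P}_{m-1}$. The only point requiring a word of care is the bookkeeping of the summation range: one must note that the upper limit $m-1$ in \eqref{eq:wmx} does not truncate anything relevant because $z^k \in \mathcal{P}_{m-1}$ precisely when $k \le m-1$, which is the hypothesis. I would present the projection argument, as it is the shortest and makes transparent why the bound $k \le m-1$ is exactly what is needed.
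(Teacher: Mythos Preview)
Your proof is correct and follows essentially the same route as the paper: expand the monomial $z^k$ in the orthonormal basis $\{Q_\ell\}$, use that the coefficients with index $\ell>k$ vanish, and recognize the resulting sum as $x^k$. The projection reformulation you offer is just a repackaging of this same argument.
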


\begin{proof}
$\{Q_{\ell}\}_{\ell=0}^{\infty}$ builds an orthonormal basis on $[0,1]$
and thus every monomial has a unique representation 
\[
x^{k}=\sum_{\ell=0}^{\infty}a_{\ell}Q_{\ell}(x)=\sum_{\ell=0}^{\infty}\left\langle z^{k},Q_{\ell}(z)\right\rangle _{L^{2}[0,1]}Q_{\ell}(x)
\]
in terms of the scaled Legendre polynomials. Furthermore, every coefficient
$a_{\ell}$ with $\ell>k$ vanishes as $x^{k}$ is contained in the
linear span of $\{Q_{0},Q_{1},\dots,Q_{k}\}$. Thus 
\[
x^{k}=\sum_{\ell=0}^{k}\left\langle z^{k},Q_{\ell}(z)\right\rangle _{L^{2}[0,1]}Q_{\ell}(x)
\]
holds for every $x\in[0,1]$. It therefore follows that 
\begin{equation}
\int_{0}^{1}w_{m}^{x}(z)z^{k}dz=\sum_{\ell=0}^{m-1}Q_{\ell}(x)\int_{0}^{1}Q_{\ell}(z)z^{k}dz=\sum_{\ell=0}^{k}\left\langle z^{k},Q_{\ell}(z)\right\rangle _{L^{2}[0,1]}Q_{\ell}(x)=x^{k}\label{eq:argument}
\end{equation}
for every $k\le m-1$. This is the assertion.
\end{proof}
\begin{rem}
\label{rem:Gen1}The moment property may be extended towards general
intervals with the following linear transformation. Given an interval
$[a,b]$ as well as a point $x\in[a,b]$, we define the modified moment
function $w_{m,[a,b]}^{x}(z)\coloneqq(b-a)^{-1}w_{m}^{\tilde{x}}\big(\frac{z-a}{b-a}\big)$,
where $\tilde{x}\coloneqq(b-a)^{-1}x-a(b-a)^{-1}$. Then, it follows
that 
\begin{align*}
\int_{a}^{b}z^{k}w_{m,[a,b]}^{x}(z)dz=(b-a)^{-1}\int_{a}^{b}z^{k}w_{m}^{\tilde{x}}\left(\frac{z-a}{b-a}\right)dz & =\int_{0}^{1}(y(b-a)+a)^{k}w_{m}^{\tilde{x}}(y)dy
\end{align*}
by substituting $y=\frac{z-a}{b-a}$. Further, employing the regular
moment property \eqref{eq:MomProp}, it holds
\begin{align*}
\int_{0}^{1}(y(b-a)+a)^{k}w_{m}^{\tilde{x}}(y)dy & =\sum_{\ell=0}^{k}\binom{k}{\ell}\int_{0}^{1}(y(b-a))^{\ell}a^{\ell-k}w_{m}^{\tilde{x}}(y)dy\\
 & =\sum_{\ell=0}^{k}\binom{k}{\ell}(\tilde{x}(b-a))^{\ell}a^{\ell-k}=(\tilde{x}(b-a)+a)^{k}=x^{k},
\end{align*}
which is the moment property \ref{eq:MomProp} for the Lebesgue measure
on $[a,b]$.
\end{rem}

Besides the approximation error $\left\Vert L_{k}w-k_{x}\right\Vert _{k}^{2}$
, the central objective 
\begin{equation}
\inf_{w\in L^{2}}\lambda\left\Vert w\right\Vert _{L^{2}}^{2}+\left\Vert L_{k}w-k_{x}\right\Vert _{k}^{2}\label{eq:centralobj}
\end{equation}
involves the norm of the underlying weight function as well. We therefore
provide a bound on the norm $\left\Vert w_{m}^{x}\right\Vert _{L^{2}[0,1]}$
of the auxiliary function $w_{m}^{x}$ next. This bound is uniform
in the location of $x\in[0,1]$, which is necessary to derive a uniform
bound on \eqref{eq:centralobj} later on. 
\begin{prop}[Norm of the minimal moment function]
 The bound
\begin{equation}
\sup_{x\in\mathcal{X}}\left\Vert w_{m}^{x}\right\Vert _{L^{2}[0,1]}^{2}\le m^{2},\label{eq:momfuncnorm}
\end{equation}
holds for every $m\in\mathbb{N}$. 
\end{prop}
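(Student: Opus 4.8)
The plan is to reduce the squared $L^{2}$ norm of $w_{m}^{x}$ to a sum of squared values of the scaled Legendre polynomials at the single point $x$, and then estimate that sum by elementary means. First I would expand the square defining $\left\Vert w_{m}^{x}\right\Vert_{L^{2}[0,1]}^{2}$ and invoke the orthonormality of $\{Q_{\ell}\}_{\ell=0}^{\infty}$ on $[0,1]$ recorded in Proposition~\ref{prop:LegendreProp}. Concretely,
\[
\left\Vert w_{m}^{x}\right\Vert_{L^{2}[0,1]}^{2}=\int_{0}^{1}\left(\sum_{\ell=0}^{m-1}Q_{\ell}(x)Q_{\ell}(z)\right)^{2}dz=\sum_{\ell,\ell'=0}^{m-1}Q_{\ell}(x)Q_{\ell'}(x)\int_{0}^{1}Q_{\ell}(z)Q_{\ell'}(z)\,dz,
\]
and the orthogonality relation \eqref{eq:OrthoPoly} collapses the double sum to its diagonal, leaving $\left\Vert w_{m}^{x}\right\Vert_{L^{2}[0,1]}^{2}=\sum_{\ell=0}^{m-1}Q_{\ell}(x)^{2}$.

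Second, I would insert the pointwise bound $|Q_{\ell}(x)|\le\sqrt{2\ell+1}$ from Proposition~\ref{prop:LegendreProp}, which holds uniformly in $x\in[0,1]$. This gives
\[
\sup_{x\in[0,1]}\left\Vert w_{m}^{x}\right\Vert_{L^{2}[0,1]}^{2}\le\sum_{\ell=0}^{m-1}(2\ell+1)=m^{2},
\]
the last equality being the elementary identity for the sum of the first $m$ odd integers. Since the auxiliary function $w_{m}^{x}$ is parametrized by the scalar $x\in[0,1]$ on which the moment conditions \eqref{eq:MomProp} are imposed, this is exactly the asserted uniform bound.

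I do not expect a genuine obstacle here; the argument is a direct computation. The only points worth attention are that one must work with the \emph{scaled} polynomials $Q_{\ell}$, so that the orthonormality constant in \eqref{eq:OrthoPoly} is exactly $1$ and no spurious factors survive the collapse of the double sum, and that the crude envelope $\sqrt{2\ell+1}$ — rather than the sharper but $x$-dependent bound on $P_{\ell}$ that degenerates near the endpoints — is precisely what keeps the estimate uniform in $x$. The resulting quadratic growth $m^{2}$ is the price of this uniformity, and it is in this form that the bound subsequently enters the uniform control of the objective \eqref{eq:centralobj}.
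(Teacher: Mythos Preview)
Your proof is correct and follows exactly the same approach as the paper: use orthonormality of $\{Q_\ell\}$ to reduce $\|w_m^x\|_{L^2[0,1]}^2$ to $\sum_{\ell=0}^{m-1}Q_\ell(x)^2$, then apply the pointwise bound $|Q_\ell(x)|\le\sqrt{2\ell+1}$ and sum the odd integers to obtain $m^2$.
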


\begin{proof}
The scaled Legendre polynomials build an orthonormal basis and they
are bounded by $\left|Q_{\ell}(x)\right|\le\sqrt{2\ell+1}$, where
$x\in[0,1]$. Thus, we have that 
\[
\sup_{x\in\mathcal{X}}\left\Vert w_{m}^{x}\right\Vert _{L^{2}[0,1]}^{2}=\sup_{x\in\mathcal{X}}\sum_{\ell=0}^{m-1}\left\Vert Q_{\ell}\right\Vert _{L^{2}[0,1]}^{2}Q_{\ell}(x)^{2}=\sup_{x\in[0,1]}\sum_{j=0}^{m-1}Q_{j}(x)^{2}\le\sum_{j=0}^{m-1}2j+1=m^{2},
\]
which is the assertion.
\end{proof}

\subsection{Multivariate Extension\label{subsec:Multivariate-Extension}}

The moment property \eqref{eq:MomProp} as well as the associated
norm bound \eqref{eq:momfuncnorm}, heavily rely on the proposed setting,
i.e., the design space $[0,1]$ equipped with the uniform distribution.
These assumptions are quite restrictive and need to be relaxed for
situations of practical application. To this end, we investigate a
more general setting throughout this section.

We consider the multivariate design space $\mathcal{X}=[0,1]^{d}$
with some underlying design measure $P$. This measure has a strictly
positive density $p$, such that
\[
0<p_{\min}\coloneqq\inf_{z\in\mathcal{X}}p(z)\le p_{\max}\coloneqq\sup_{z\in\mathcal{X}}p(z)<\infty,
\]
giving rise to the inner product 
\[
\left\langle f,g\right\rangle _{L^{2}(P)}=\int_{[0,1]^{d}}f(z)g(z)p(z)dz.
\]

In what follows we specify a multivariate version of $w_{m}^{x}$,
which is adjusted to this more general setting. Building on the univariate
moment property of the $w_{m}^{x}$, we demonstrate that the corresponding
tensor product satisfies a multivariate version of~\eqref{eq:MomProp}.
The next proposition reveals the precise statement. 
\begin{prop}[Upper bound of the weight function in higher dimensions]
\label{prop:MomDim} Let $x=(x_{1},\dots,x_{d})\in[0,1]^{d}$ and
consider the function 

\begin{equation}
W_{m}^{x}(z_{1},\dots,z_{d})\coloneqq\left(\prod_{i=1}^{d}w_{m}^{x_{i}}(z_{i})\right)p(z_{1},\dots,z_{n})^{-1},\label{eq:PointApprox}
\end{equation}
where $w_{m}^{x_{i}}$ are the functions defined in~\eqref{eq:wmx}.
The function $W_{m}^{x}$ satisfies the general moment property 
\begin{equation}
\int_{[0,1]^{d}}\left(\left\Vert y-z\right\Vert _{2}^{2}\right)^{\ell}W_{m}^{x}(z)p(z)dz=\left(\left\Vert y-x\right\Vert _{2}^{2}\right)^{\ell},\label{eq:MomProbGen}
\end{equation}
for all integers $\ell\le\frac{m-1}{2}$. Its norm is bounded by 
\begin{align}
\sup_{x\in\mathcal{X}}\left\Vert W_{m}^{x}\right\Vert _{L^{2}(P)}^{2} & \le c_{p}m^{2d}\label{eq:MoreDimNormBound}
\end{align}
where $c_{p}=\sup_{z\in[0,1]^{d}}p(z)^{-1}$ . 
\end{prop}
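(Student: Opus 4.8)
The plan is to prove the two assertions separately, both reducing to the univariate case already established. For the norm bound \eqref{eq:MoreDimNormBound}, I would write $W_m^x = \big(\prod_{i=1}^d w_m^{x_i}\big)\cdot p^{-1}$ and compute
\[
\left\Vert W_m^x\right\Vert_{L^2(P)}^2 = \int_{[0,1]^d}\left(\prod_{i=1}^d w_m^{x_i}(z_i)\right)^2 p(z)^{-2}p(z)\,dz \le c_p \int_{[0,1]^d}\prod_{i=1}^d w_m^{x_i}(z_i)^2\,dz,
\]
where $c_p = \sup_z p(z)^{-1}$ is used to bound $p(z)^{-1}\le c_p$. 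The remaining integral factorizes over coordinates as $\prod_{i=1}^d\int_0^1 w_m^{x_i}(z_i)^2\,dz_i = \prod_{i=1}^d \left\Vert w_m^{x_i}\right\Vert_{L^2[0,1]}^2$, and each factor is at most $m^2$ by \eqref{eq:momfuncnorm}; taking the supremum over $x$ gives $c_p m^{2d}$. This part is essentially immediate.

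For the moment property \eqref{eq:MomProbGen}, the key observation is that the weighting by $p^{-1}$ inside $W_m^x$ cancels the density in the integral against $p(z)\,dz$, so that
\[
\int_{[0,1]^d}\big(\left\Vert y-z\right\Vert_2^2\big)^\ell W_m^x(z)p(z)\,dz = \int_{[0,1]^d}\big(\left\Vert y-z\right\Vert_2^2\big)^\ell \prod_{i=1}^d w_m^{x_i}(z_i)\,dz,
\]
reducing everything to Lebesgue integration and the univariate moment conditions \eqref{eq:MomProp}. I would then expand $\left\Vert y-z\right\Vert_2^2 = \sum_{i=1}^d (y_i - z_i)^2$ and raise to the power $\ell$ using the multinomial theorem, obtaining a sum of terms of the form $\prod_{i=1}^d (y_i-z_i)^{2\alpha_i}$ with $\sum_i \alpha_i = \ell$. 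Each such term, integrated against $\prod_i w_m^{x_i}(z_i)$, factorizes into $\prod_{i=1}^d \int_0^1 (y_i-z_i)^{2\alpha_i} w_m^{x_i}(z_i)\,dz_i$. Since $(y_i - z_i)^{2\alpha_i}$ is a polynomial in $z_i$ of degree $2\alpha_i \le 2\ell \le m-1$, the univariate moment property \eqref{eq:MomProp} (applied to each monomial $z_i^k$, $k\le m-1$, appearing in its expansion) yields $\int_0^1 (y_i-z_i)^{2\alpha_i} w_m^{x_i}(z_i)\,dz_i = (y_i - x_i)^{2\alpha_i}$. Reassembling the multinomial sum with $x$ in place of $z$ gives $\big(\sum_i (y_i-x_i)^2\big)^\ell = \big(\left\Vert y-x\right\Vert_2^2\big)^\ell$, as claimed.

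The main point requiring care is the degree bookkeeping: one must verify that the constraint $\ell \le \frac{m-1}{2}$ is exactly what guarantees every polynomial in $z_i$ that arises has degree at most $m-1$, so that \eqref{eq:MomProp} is applicable to all its monomials. This is where the factor of $2$ in the exponent (coming from $\left\Vert\cdot\right\Vert_2^2$) enters. Beyond that, the argument is a routine combination of Fubini's theorem, the multinomial expansion, and linearity, and I do not anticipate any genuine obstacle.
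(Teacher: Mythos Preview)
Your proposal is correct and follows essentially the same route as the paper: cancel the density, apply the multinomial expansion to $\bigl(\sum_i (y_i-z_i)^2\bigr)^\ell$, factor via Fubini, invoke the univariate moment property \eqref{eq:MomProp} coordinatewise, and reassemble; the norm bound likewise bounds $p^{-1}\le c_p$ and factorizes. If anything, your degree bookkeeping (noting $2\alpha_i\le 2\ell\le m-1$) is slightly more explicit than the paper's justification.
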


\begin{proof}
The moment property \eqref{eq:MomProbGen} follows from
\begin{align*}
\int_{[0,1]^{d}}\left(\left\Vert y-z\right\Vert _{2}^{2}\right)^{\ell}W_{m}^{x}(z)p(z)dz & =\int_{0}^{1}\dots\int_{0}^{1}\left(\sum_{i=1}^{d}(y_{i}-z_{i})^{2}\right)^{\ell}\prod_{i=1}^{d}w_{m}^{x_{i}}(z_{i})dz_{1}\dots dz_{d}\\
 & =\sum_{h_{1}+\dots+h_{d}=\ell}\binom{\ell}{h_{1},\dots,h_{d}}\prod_{i=1}^{d}\int_{0}^{1}(y_{i}-z_{i})^{2h_{i}}w_{m}^{x_{i}}(z_{i})dz_{i}\\
 & =\sum_{h_{1}+\dots+h_{d}=\ell}\binom{\ell}{h_{1},\dots,h_{d}}\prod_{i=1}^{d}(y_{i}-x_{i})^{2h_{i}}\\
 & =\left(\left\Vert y-x\right\Vert _{2}^{2}\right)^{\ell}
\end{align*}
as $\int_{0}^{1}z_{i}^{h_{i}}w_{m}^{x_{i}}(z_{i})dz_{i}=x_{i}^{h_{i}}$
holds for all integers $h_{i}\le\frac{m-1}{2}$. This is the first
assertion. 

For the second assertion recall that $\left\Vert w_{m}^{x_{i}}\right\Vert _{L^{2}[0,1]}^{2}\le m^{2}$
a holds by \eqref{eq:momfuncnorm}. Hence, we get that
\begin{align*}
\sup_{x\in\mathcal{X}}\int_{[0,1]^{d}}(W_{m}^{x}(z))^{2}p(z)dz & =\int_{0}^{1}\dots\int_{0}^{1}\left(\prod_{i=1}^{n}w_{m}^{x_{i}}(z_{i})\right)^{2}p^{-1}(z_{1},\dots,z_{n})dz_{1}\dots dz_{d}\\
 & \le\sup_{z\in[0,1]^{d}}\left|p^{-1}(z)\right|\prod_{i=1}^{d}\int_{0}^{1}(w_{m}^{x_{i}}(z_{i}))^{2}dz_{i}\le c_{p}m^{2d},
\end{align*}
which is the assertion.
\end{proof}
\begin{rem}
\label{rem:Gen2}With the same arguments as above, the function 
\[
W_{m,\prod_{i}[a_{i},b_{i}]}^{x}(z_{1},\dots,z_{d})\coloneqq\left(\prod_{i=1}^{d}w_{[a_{i},b_{i}]}^{x_{i}}(z_{i})\right)p(z_{1},\dots,z_{d})
\]
 satisfies the multivariate moment property on the hyper-rectangle
$[a_{1},b_{1}]\times\dots\times[a_{n},b_{n}]$ ( for the definition
of $w_{[a,b]}^{x_{i}}$ see Remark~\ref{rem:Gen1}). 
\end{rem}

The formula \eqref{eq:MomProbGen} is crucial for analyzing the approximation
error between $L_{k}w$ and $k_{x}$. To see this, assume for the
moment that $k(\cdot,x)$ has the series representation $k(\cdot,x)=\sum_{\ell=0}^{\infty}a_{\ell}\left\Vert x-\cdot\right\Vert _{2}^{2\ell}$.
Then, it follows from \eqref{eq:MomProbGen} that
\begin{equation}
k(x,\cdot)\approx\sum_{\ell=0}^{\left\lfloor \frac{m}{2}\right\rfloor }a_{\ell}\left\Vert x-\cdot\right\Vert _{2}^{2\ell}=\int_{\mathcal{X}}\sum_{\ell=0}^{\left\lfloor \frac{m}{2}\right\rfloor }a_{\ell}\left\Vert z-\cdot\right\Vert _{2}^{2\ell}W_{m}^{x}(z)dz\approx(L_{k}W_{m}^{x})(\cdot),\label{eq:KernelIdea}
\end{equation}
by replacing the kernel with its truncated power series. Thus, the
magnitude of the series remainder $\sum_{\ell=\left\lfloor \frac{m}{2}\right\rfloor +1}^{\infty}a_{\ell}\left\Vert x-\cdot\right\Vert _{2}^{2\ell}$
determines the approximation error between both functions. The next
section picks up this idea in mathematically precise way. 

\subsection{Approximation error}

We now elaborate the norm of the residuals $L_{k}w_{m}^{x}-k_{x}$
for a large class of kernel functions $k$. The main result is an
explicit bound of the approximation error $\left\Vert L_{k}w_{m}^{x}-k_{x}\right\Vert _{k}^{2}$,
which is uniform in the location $x\in\mathcal{X}$. To this end,
we provide a bound in the uniform norm first, which we then extend
towards $\left\Vert \cdot\right\Vert _{k}$. Throughout this section,
we consider radial kernels 
\begin{equation}
k(x,y)=\sum_{\ell=0}^{\infty}a_{\ell}Q_{\ell}\left(d^{-1}\left\Vert x-y\right\Vert _{2}^{2}\right),\label{eq:KernelLimDiff}
\end{equation}
where $\left\{ Q_{\ell}\right\} _{\ell=0}^{\infty}$ denote the scaled
Legendre polynomials introduced in \eqref{eq:LegendreDerivativeRep}.
In other words, the kernel has the form $k(x,y)=\phi\big(d^{-1}\left\Vert x-y\right\Vert _{2}^{2}\big)$,
where $\phi$ has a Legendre expansion with coefficients $(a_{\ell})_{\ell=0}^{\infty}$. 

In what follows we utilize this structure and employ a similar technique
as in \eqref{eq:KernelIdea}. As the $Q_{\ell}$ is a polynomial of
the degree at most $\ell$, $Q_{\ell}\left(d^{-1}\left\Vert x-y\right\Vert _{2}^{2}\right)$
is a linear combination of powers $\left\Vert x-y\right\Vert _{2}^{2k}$
with $k\le\ell$. Hence, by the same arguments as in \eqref{eq:KernelIdea},
the first terms in the Legendre expansions of $L_{k}w_{m}^{x}$ and
$k_{x}$ coincide. With this we derive a bound of the approximation
error in terms of the series remainder, which we state next. 
\begin{thm}
\label{thm:KernelApproxInf} Let $\mathcal{X}=[0,1]^{d}$ and $k$
be a kernel $k(x,y)=\phi\left(d^{-1}\left\Vert x-y\right\Vert _{2}^{2}\right)$
admitting the expansion 
\begin{equation}
\phi(x)=\sum_{\ell=0}^{\infty}a_{\ell}Q_{\ell}(x)\hspace{1em}x\in[0,1].\label{eq:LegExp}
\end{equation}
It holds that
\begin{align}
\sup_{x\in\mathcal{X}}\left\Vert L_{k}W_{m}^{x}-k_{x}\right\Vert _{L^{\infty}} & \le\left(1+c_{p}m^{d}\right)\mathcal{E}_{\phi}\left(\left\lfloor \frac{m-1}{2}\right\rfloor \right),\label{eq:sobolevkernelapproxinf}
\end{align}
where $\mathcal{E}\colon\mathbb{N}\to[0,\infty)$ defined by 
\begin{equation}
\mathcal{E}_{\phi}(n)\coloneqq\max_{z\in[0,1]}\left|\sum_{\ell=n+1}^{\infty}a_{\ell}Q_{\ell}(z)\right|\label{eq:Errorfunc}
\end{equation}
is the absolute maximum of the remainder of the Legendre series. 
\end{thm}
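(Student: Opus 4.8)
The plan is to bound the difference $L_k W_m^x - k_x$ pointwise by splitting the kernel into its truncated Legendre partial sum and the remainder. Write $N \coloneqq \lfloor \frac{m-1}{2}\rfloor$ and decompose $\phi = S_N + R_N$, where $S_N(t) = \sum_{\ell=0}^{N} a_\ell Q_\ell(t)$ is a polynomial of degree $N$ in $t$, and $R_N(t) = \sum_{\ell=N+1}^\infty a_\ell Q_\ell(t)$. Correspondingly, for fixed $y \in \mathcal{X}$, write $k(x,y) = \phi(d^{-1}\|x-y\|_2^2) = S_N(d^{-1}\|x-y\|_2^2) + R_N(d^{-1}\|x-y\|_2^2)$.

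First I would handle the polynomial part $S_N$. Since $Q_\ell$ has degree at most $\ell \le N$, the function $t \mapsto S_N(t)$ is a polynomial of degree $\le N$, so $t \mapsto S_N(d^{-1}t)$ is a polynomial of degree $\le N$ in $t$, hence $S_N\big(d^{-1}\|x-y\|_2^2\big)$ is a linear combination of powers $\big(\|x-y\|_2^2\big)^\ell$ with $\ell \le N$. Because $N \le \frac{m-1}{2}$, the multivariate moment property \eqref{eq:MomProbGen} of $W_m^x$ applies to each such power, giving
\[
\int_{\mathcal{X}} S_N\big(d^{-1}\|z-y\|_2^2\big)\, W_m^x(z)\, p(z)\, dz = S_N\big(d^{-1}\|x-y\|_2^2\big)
\]
for every $y \in \mathcal{X}$; that is, $L_k^{S_N} W_m^x = S_N(d^{-1}\|x-\cdot\|_2^2)$ exactly, where $L_k^{S_N}$ is the integral operator with kernel $S_N(d^{-1}\|\cdot-\cdot\|_2^2)$. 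Consequently $L_k W_m^x - k_x = L_k^{R_N} W_m^x - R_N(d^{-1}\|x-\cdot\|_2^2)$, and it remains to bound this residual in the sup norm.

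For the remainder part, I would estimate the two terms separately. The pointwise bound $|R_N(d^{-1}\|x-y\|_2^2)| \le \mathcal{E}_\phi(N)$ holds directly from the definition \eqref{eq:Errorfunc} since $d^{-1}\|x-y\|_2^2 \in [0,1]$ when $\mathcal{X}=[0,1]^d$. For the operator term, for fixed $y$,
\[
\Big| \int_{\mathcal{X}} R_N\big(d^{-1}\|z-y\|_2^2\big) W_m^x(z) p(z)\, dz \Big| \le \mathcal{E}_\phi(N) \int_{\mathcal{X}} |W_m^x(z)| p(z)\, dz \le \mathcal{E}_\phi(N)\, \big\| W_m^x \big\|_{L^2(P)},
\]
using Cauchy--Schwarz with $\|1\|_{L^2(P)}=1$, and then the norm bound \eqref{eq:MoreDimNormBound} gives $\|W_m^x\|_{L^2(P)} \le \sqrt{c_p}\, m^d$. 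Combining, $\sup_{x}\|L_k W_m^x - k_x\|_{L^\infty} \le \mathcal{E}_\phi(N)\big(1 + \sqrt{c_p}\, m^d\big)$, and absorbing $\sqrt{c_p}$ into the stated constant $c_p$ (or simply noting $\sqrt{c_p} \le c_p$ if $c_p \ge 1$, which holds whenever $p_{\max}\le 1$; otherwise one adjusts the constant) yields \eqref{eq:sobolevkernelapproxinf}.

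The main obstacle I anticipate is justifying the interchange of summation and integration needed to make the decomposition $\phi = S_N + R_N$ rigorous at the level of the integral operator — i.e., verifying that $\sum_{\ell} a_\ell \int_{\mathcal{X}} Q_\ell(d^{-1}\|z-y\|_2^2) W_m^x(z) p(z)\, dz$ converges and equals $\int_{\mathcal{X}} \phi(d^{-1}\|z-y\|_2^2) W_m^x(z) p(z)\, dz$. This requires either uniform convergence of the Legendre series of $\phi$ on $[0,1]$ (which follows under Assumption~\ref{asuA} or~\ref{asuB}, e.g. from summability of $|a_\ell|\sqrt{2\ell+1}$) or dominated convergence; I would establish it from the decay of $(a_\ell)$ implied by the smoothness hypotheses, noting $|Q_\ell| \le \sqrt{2\ell+1}$ from Proposition~\ref{prop:LegendreProp}. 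Everything else is bookkeeping with the moment property and the norm bound already in hand.
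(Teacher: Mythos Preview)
Your argument is correct and mirrors the paper's proof almost exactly: the same decomposition into the degree-$N$ Legendre partial sum plus remainder, the same use of the moment property \eqref{eq:MomProbGen} to kill the polynomial part, and the same Cauchy--Schwarz estimate on the remainder term followed by the norm bound \eqref{eq:MoreDimNormBound}. One small correction: $c_p=\sup p^{-1}=1/p_{\min}\ge 1$ always holds because $p$ is a probability density on the unit cube (so $p_{\min}\le 1$), not because of any condition on $p_{\max}$; with this, $\sqrt{c_p}\le c_p$ is automatic and the stated bound follows.
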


\begin{proof}
Let the kernel $k$ be such that
\[
k(x,y)=\sum_{\ell=0}^{\infty}a_{\ell}Q_{\ell}\left(d^{-1}\left\Vert x-y\right\Vert _{2}^{2}\right)
\]
and note that this expression is well defined for every $x,y\in[0,1]^{d}$,
as $d^{-1}\left\Vert x-y\right\Vert _{2}^{2}\in[0,1]$. The $\ell$th
basis function $Q_{\ell}$ is a polynomial of degree $\ell$ and it
hence follows from \eqref{eq:MomProbGen} that 
\[
\int_{\mathcal{X}}Q_{\ell}\left(d^{-1}\left\Vert y-z\right\Vert _{2}^{2}\right)W_{m}^{x}(z)p(z)dz=Q_{\ell}\left(d^{-1}\left\Vert y-x\right\Vert _{2}^{2}\right)
\]
holds for every $\ell\le\frac{m-1}{2}$. Thus, by involving the series
representation $k$, we get that
\begin{align*}
(L_{k}W_{m}^{x})(y) & =\int_{\mathcal{X}}\sum_{\ell=0}^{\infty}a_{\ell}Q_{\ell}\left(d^{-1}\left\Vert y-z\right\Vert _{2}^{2}\right)W_{m}^{x}(z)p(z)dz\\
 & =\sum_{\ell=0}^{\left\lfloor \frac{m-1}{2}\right\rfloor }a_{\ell}Q_{\ell}\left(d^{-1}\left\Vert y-x\right\Vert _{2}^{2}\right)+\sum_{\ell=\left\lfloor \frac{m-1}{2}\right\rfloor +1}^{\infty}\int_{\mathcal{X}}a_{\ell}Q_{\ell}\left(d^{-1}\left\Vert y-z\right\Vert _{2}^{2}\right)W_{m}^{x}(z)p(z)dz
\end{align*}
for every $x,y\in[0,1]^{d}$. The residual between the $L_{k}W_{m}^{x}$
and the associated point evaluation function $k_{x}$ therefore is
\begin{align*}
\MoveEqLeft[2]\left|k(y,x)-(L_{k}W_{m}^{x})(y)\right|\\
 & =\left|k(y,x)-\sum_{\ell=0}^{\left\lfloor \frac{m-1}{2}\right\rfloor }a_{\ell}Q_{\ell}\left(d^{-1}\left\Vert y-x\right\Vert _{2}^{2}\right)+\int_{\mathcal{X}}\sum_{\ell=\left\lfloor \frac{m-1}{2}\right\rfloor +1}^{\infty}a_{\ell}Q_{\ell}\left(d^{-1}\left\Vert y-z\right\Vert _{2}^{2}\right)W_{m}^{x}(z)dz\right|\\
 & =\left|\sum_{\ell=\left\lfloor \frac{m-1}{2}\right\rfloor +1}^{\infty}a_{\ell}Q_{\ell}\left(d^{-1}\left\Vert y-x\right\Vert _{2}^{2}\right)+\int_{\mathcal{X}}\sum_{\ell=\left\lfloor \frac{m-1}{2}\right\rfloor +1}^{\infty}a_{\ell}Q_{\ell}\left(d^{-1}\left\Vert y-z\right\Vert _{2}^{2}\right)W_{m}^{x}(z)dz\right|\\
 & \le\left|\sum_{\ell=\left\lfloor \frac{m-1}{2}\right\rfloor +1}^{\infty}a_{\ell}Q_{\ell}\left(d^{-1}\left\Vert y-x\right\Vert _{2}^{2}\right)\right|+\left\Vert \sum_{\ell=\left\lfloor \frac{m-1}{2}\right\rfloor +1}^{\infty}a_{\ell}Q_{\ell}\left(d^{-1}\left\Vert y-z\right\Vert _{2}^{2}\right)\right\Vert _{L^{2}(P)}\left\Vert W_{m}^{x}\right\Vert _{L^{2}(P)},
\end{align*}
where we involved the Cauchy\textendash Schwarz inequality for the
last term. Taking the supremum with respect to $x$ and $y$ as well
as involving the bound \eqref{eq:MoreDimNormBound}, we get that 
\[
\sup_{x\in[0,1]^{d}}\left\Vert L_{k}W_{m}^{x}-k_{x}\right\Vert _{L^{\infty}}\le\mathcal{E}_{\phi}\left(\left\lfloor \frac{m-1}{2}\right\rfloor \right)+\mathcal{E}_{\phi}\left(\left\lfloor \frac{m-1}{2}\right\rfloor \right)c_{p}m^{d},
\]
which is the assertion.
\end{proof}
The uniform norm $\|\cdot\|_{\infty}$ is generally weaker than $\left\Vert \cdot\right\Vert _{k}$
and \eqref{eq:sobolevkernelapproxinf} is therefore not sufficient
to bound on the objective value \eqref{eq:centralobj}. However, for
the specific function $L_{k}W_{m}^{x}-k_{x}$, it is indeed possible
to derive a bound on the approximation error $\left\Vert L_{k}W_{m}^{x}-k_{x}\right\Vert _{k}$
by considering pointwise residuals. Due to the linearity of the inner
product, the approximation error has the explicit representation
\begin{align*}
\left\Vert L_{k}W_{m}^{x}-k_{x}\right\Vert _{k}^{2} & =\left\langle L_{k}W_{m}^{x},L_{k}W_{m}^{x}\right\rangle _{k}-2\left\langle L_{k}W_{m}^{x},k_{x}\right\rangle _{k}+\left\langle k_{x},k_{x}\right\rangle _{k}\\
 & =\left\langle L_{k}W_{m}^{x},L_{k}W_{m}^{x}-k_{x}\right\rangle _{k}-\left\langle L_{k}W_{m}^{x},k_{x}\right\rangle _{k}+\left\langle k_{x},k_{x}\right\rangle _{k}\\
 & =\left\langle W_{m}^{x},L_{k}W_{m}^{x}-k_{x}\right\rangle _{L^{2}(P)}-\left\langle L_{k}W_{m}^{x},k_{x}\right\rangle _{k}+\left\langle k_{x},k_{x}\right\rangle _{k},
\end{align*}
by employing \eqref{eq:normid}. Hence, it follows the reproducing
property of $k_{x}$ as well as the Cauchy\textendash Schwarz inequality
that 
\begin{align*}
\left\langle W_{m}^{x},L_{k}W_{m}^{x}-k_{x}\right\rangle _{L^{2}(P)}-\left\langle L_{k}W_{m}^{x},k_{x}\right\rangle _{k}+\left\langle k_{x},k_{x}\right\rangle  & \le\left\Vert W_{m}^{x}\right\Vert _{L^{2}}\left\Vert L_{k}W_{m}^{x}-k_{x}\right\Vert _{L^{2}(P)}-L_{k}W_{m}^{x}(x)+k_{x}(x)\\
 & \le\left\Vert W_{m}^{x}\right\Vert _{L^{2}}\left\Vert L_{k}W_{m}^{x}-k_{x}\right\Vert _{L^{\infty}}+\left\Vert L_{k}W_{m}^{x}-k_{x}\right\Vert _{L^{\infty}}.
\end{align*}
This gives the bound 
\begin{equation}
\left\Vert L_{k}W_{m}^{x}-k_{x}\right\Vert _{k}^{2}\le\left\Vert W_{m}^{x}\right\Vert _{L^{2}(P)}\left\Vert L_{k}W_{m}^{x}-k_{x}\right\Vert _{L^{\infty}}+\left\Vert L_{k}W_{m}^{x}-k_{x}\right\Vert _{L^{\infty}},\label{eq:HkUniform}
\end{equation}
which only depends on the uniform error $\left\Vert L_{k}W_{m}^{x}-k_{x}\right\Vert _{L^{\infty}}$
as well as the norm of the associated weight function $W_{m}^{x}$.
In the next Theorem, we pick up this idea and establish an explicit
bound on the approximation error $\left\Vert L_{k}W_{m}^{x}-k_{x}\right\Vert _{k}^{2}$.
\begin{thm}
\label{thm:KernelApproxHk} Under the same assumptions as in Theorem~\ref{thm:KernelApproxInf},
the inequality

\begin{equation}
\sup_{x\in[0,1]^{d}}\left\Vert L_{k}W_{m}^{x}-k_{x}\right\Vert _{k}^{2}\le4c_{p}^{2}m^{2d}\mathcal{E}_{\phi}\left(\left\lfloor \frac{m-1}{2}\right\rfloor \right)\label{eq:sobolevkernelapproxHk}
\end{equation}
holds for every $m\in\mathbb{N}$.
\end{thm}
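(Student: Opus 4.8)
The plan is to combine the two estimates already assembled in the excerpt: the uniform approximation bound from Theorem~\ref{thm:KernelApproxInf} and the pointwise inequality \eqref{eq:HkUniform} that expresses $\left\Vert L_{k}W_{m}^{x}-k_{x}\right\Vert _{k}^{2}$ in terms of $\left\Vert L_{k}W_{m}^{x}-k_{x}\right\Vert _{L^{\infty}}$ and $\left\Vert W_{m}^{x}\right\Vert _{L^{2}(P)}$. First I would take the supremum over $x\in[0,1]^{d}$ in \eqref{eq:HkUniform}, then insert the bound $\sup_{x}\left\Vert W_{m}^{x}\right\Vert _{L^{2}(P)}^{2}\le c_{p}m^{2d}$ from \eqref{eq:MoreDimNormBound} (so $\sup_{x}\left\Vert W_{m}^{x}\right\Vert _{L^{2}(P)}\le\sqrt{c_{p}}\,m^{d}$) together with the uniform bound $\sup_{x}\left\Vert L_{k}W_{m}^{x}-k_{x}\right\Vert _{L^{\infty}}\le(1+c_{p}m^{d})\mathcal{E}_{\phi}\big(\lfloor\frac{m-1}{2}\rfloor\big)$ from \eqref{eq:sobolevkernelapproxinf}.

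Carrying this out gives
\[
\sup_{x\in[0,1]^{d}}\left\Vert L_{k}W_{m}^{x}-k_{x}\right\Vert _{k}^{2}\le\big(\sqrt{c_{p}}\,m^{d}+1\big)(1+c_{p}m^{d})\,\mathcal{E}_{\phi}\Big(\Big\lfloor \tfrac{m-1}{2}\Big\rfloor \Big).
\]
The remaining work is purely to absorb the prefactor $\big(\sqrt{c_{p}}\,m^{d}+1\big)(1+c_{p}m^{d})$ into the claimed $4c_{p}^{2}m^{2d}$. This is elementary: without loss of generality $c_{p}\ge1$ (since $c_{p}=\sup_{z}p(z)^{-1}$ and $\int p=1$ forces $p(z)\le1$ somewhere, hence $c_{p}\ge1$), so $\sqrt{c_{p}}\,m^{d}+1\le 2c_{p}m^{d}$ and $1+c_{p}m^{d}\le2c_{p}m^{d}$ for $m\ge1$, whence the product is at most $4c_{p}^{2}m^{2d}$. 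If one prefers not to argue $c_{p}\ge1$, one can instead note $\big(\sqrt{c_{p}}\,m^{d}+1\big)(1+c_{p}m^{d})\le(1+c_{p})^{2}m^{2d}\big(1+m^{-d}\big)^{2}$ and adjust the constant accordingly; the paper's statement simply fixes a clean form valid once $c_{p}\ge1$, which holds here.

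I do not expect a genuine obstacle in this proof: all the analytic content — the moment property, the norm bound on $W_{m}^{x}$, the uniform approximation estimate, and the reduction of the $\left\Vert \cdot\right\Vert _{k}$-error to the $L^{\infty}$-error — has already been established in the preceding propositions and in Theorem~\ref{thm:KernelApproxInf}. The only point requiring a moment's care is the bookkeeping of the powers of $m$: the factor $m^{d}$ from the weight-function norm multiplies the factor $m^{d}$ hidden inside the uniform error bound, producing the $m^{2d}$, and one must not accidentally lose the extra $+\mathcal{E}_{\phi}$ term from \eqref{eq:HkUniform} that is dominated by the product term. With $c_{p}\ge1$ noted once at the start, the chain of inequalities is immediate and the constant $4c_{p}^{2}$ is comfortably large enough to cover it.
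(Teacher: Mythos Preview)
Your proposal is correct and follows essentially the same route as the paper: combine \eqref{eq:HkUniform} with the norm bound \eqref{eq:MoreDimNormBound} and the uniform estimate \eqref{eq:sobolevkernelapproxinf}, then absorb the resulting prefactor into $4c_{p}^{2}m^{2d}$ using $c_{p},m^{d}\ge1$. The paper writes the prefactor as $(1+c_{p}m^{d})^{2}$ (implicitly using $\sqrt{c_{p}}\le c_{p}$) rather than your $(\sqrt{c_{p}}\,m^{d}+1)(1+c_{p}m^{d})$, but this is an inessential difference, and your explicit justification that $c_{p}\ge1$ is a small improvement over the paper's bare assertion.
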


\begin{proof}
As a consequence of \eqref{eq:HkUniform} and \eqref{eq:MoreDimNormBound}
we have that 
\begin{align*}
\left\Vert L_{k}W_{m}^{x}-k_{x}\right\Vert _{k}^{2} & \le\left\Vert W_{m}^{x}\right\Vert _{L^{2}(P)}\left\Vert L_{k}W_{m}^{x}-k_{x}\right\Vert _{L^{\infty}}+\left\Vert L_{k}W_{m}^{x}-k_{x}\right\Vert _{L^{\infty}}\\
 & \le\big(1+c_{p}m^{d}\big)\left\Vert L_{k}W_{m}^{x}-k_{x}\right\Vert _{L^{\infty}}
\end{align*}
holds true. Employing the error bound \eqref{eq:sobolevkernelapproxinf},
we further get that
\begin{align*}
\left(1+c_{p}m^{d}\right)\left\Vert L_{k}W_{m}^{x}-k_{x}\right\Vert _{L^{\infty}} & \le\left(1+c_{p}m^{d}\right)^{2}\mathcal{E}_{\phi}\left(\left\lfloor \frac{m-1}{2}\right\rfloor \right)\\
 & \le4c_{p}^{2}m^{2d}\mathcal{E}_{\phi}\left(\left\lfloor \frac{m-1}{2}\right\rfloor \right)
\end{align*}
as $c_{p},m^{d}\ge1$. This is the assertion. 
\end{proof}

\subsection{Regularized approximation of point evaluation functions}

Thus far we studied the approximation error $\left\Vert L_{k}W_{m}^{x}-k_{x}\right\Vert $
as well as the norm of the underlying weight function $\left\Vert W_{m}^{x}\right\Vert _{L^{2}(P)}$
separately. This section connects both quantities to derive a bound
on 

\begin{equation}
\sup_{x\in\mathcal{X}}\inf_{w\in L^{2}(P)}\lambda\left\Vert w\right\Vert _{L^{2}(P)}^{2}+\left\Vert L_{k}w-k_{x}\right\Vert _{k}^{2}\label{eq:WorstCase}
\end{equation}
and thus on the worst case of \eqref{eq:centralobj}. The bound is
a function of the regularization parameter $\lambda$, depending on
the decay rate of the error function $\mathcal{E}_{\phi}$ (cf.~\eqref{eq:Errorfunc}).
This decay is either polynomial or exponential, i.e., the are positive
real numbers $s,\rho,C_{\phi,s}$ and $C_{\phi,\rho}$ such that
\begin{align}
\mathcal{E}_{\phi}(n) & \le C_{\phi,s}n^{-s}\hspace{1em}\text{or}\label{eq:errorpoly}\\
\mathcal{E}_{\phi}(n) & \le C_{\phi,\rho}e^{-\rho n}\label{eq:errorexp}
\end{align}
holds for all $n\ge n_{0}$. In the following theorem we establish
the bounds corresponding to the cases \eqref{eq:errorpoly} and \eqref{eq:errorexp}. 
\begin{thm}[Bound of the objective value]
\label{thm:CentralBounds} Suppose that the assumptions of Theorem~\ref{thm:KernelApproxInf}
are satisfied. For the error functions as in \eqref{eq:errorpoly}
and \eqref{eq:errorexp} as well as sufficiently small $\lambda$,
we have the bounds 
\begin{align}
\sup_{x\in\mathcal{X}}\inf_{w\in L^{2}}\lambda\left\Vert w\right\Vert _{L^{2}(P)}^{2}+\left\Vert L_{k}w-k_{x}\right\Vert _{k}^{2} & \le\lambda C_{s}\lambda^{-\frac{2d}{s}}\hspace{1em}\text{and}\label{eq:OptValPoly}\\
\sup_{x\in\mathcal{X}}\inf_{w\in L^{2}}\lambda\left\Vert w\right\Vert _{L^{2}(P)}^{2}+\left\Vert L_{k}w-k_{x}\right\Vert _{k}^{2} & \le\lambda C_{\rho}\ln(\lambda)^{2d},\label{eq:OptValExp}
\end{align}
respectively. Here, $C_{s}$ and $C_{\rho}$ are positive constants.
\end{thm}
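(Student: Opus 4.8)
The plan is to simply plug the family $w_{m}^{x}\coloneqq W_{m}^{x}$ into the infimum and then optimize over the discrete parameter $m$. For a fixed $x$ we have
\[
\inf_{w\in L^{2}}\lambda\left\Vert w\right\Vert _{L^{2}(P)}^{2}+\left\Vert L_{k}w-k_{x}\right\Vert _{k}^{2}\le\lambda\left\Vert W_{m}^{x}\right\Vert _{L^{2}(P)}^{2}+\left\Vert L_{k}W_{m}^{x}-k_{x}\right\Vert _{k}^{2},
\]
and since both the norm bound \eqref{eq:MoreDimNormBound} and the approximation error bound \eqref{eq:sobolevkernelapproxHk} are uniform in $x\in\mathcal{X}$, taking the supremum over $x$ yields
\[
\sup_{x\in\mathcal{X}}\inf_{w\in L^{2}}\lambda\left\Vert w\right\Vert _{L^{2}(P)}^{2}+\left\Vert L_{k}w-k_{x}\right\Vert _{k}^{2}\le\lambda c_{p}m^{2d}+4c_{p}^{2}m^{2d}\mathcal{E}_{\phi}\!\left(\left\lfloor\tfrac{m-1}{2}\right\rfloor\right)
\]
for every $m\in\mathbb{N}$. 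The right-hand side is now a purely scalar quantity that I am free to minimize over $m$.

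For the polynomial case \eqref{eq:errorpoly}, substitute $\mathcal{E}_{\phi}(\lfloor(m-1)/2\rfloor)\le C_{\phi,s}'m^{-s}$ (absorbing the factor coming from $\lfloor(m-1)/2\rfloor\ge m/4$ for $m$ large into the constant), so the bound becomes of order $\lambda m^{2d}+ m^{2d-s}$. Balancing the two terms requires $\lambda m^{2d}\asymp m^{2d-s}$, i.e.\ $m^{s}\asymp\lambda^{-1}$, hence $m\asymp\lambda^{-1/s}$. Choosing $m=\lceil\lambda^{-1/s}\rceil$ (legitimate once $\lambda$ is small enough that $m\ge n_{0}$ and $m\ge 1$) gives each term of order $\lambda\cdot\lambda^{-2d/s}$, which is exactly the claimed bound \eqref{eq:OptValPoly} after collecting constants into $C_{s}$. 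For the exponential case \eqref{eq:errorexp}, the bound reads $\lambda m^{2d}+m^{2d}C_{\phi,\rho}e^{-\rho m/4}$ (again using $\lfloor(m-1)/2\rfloor\ge m/4-1$ and absorbing the resulting constant); here I pick $m=\lceil\rho^{-1}\ln(\lambda^{-1})\rceil$, so that $e^{-\rho m/4}\le\lambda^{1/4}\le\lambda$ for small $\lambda$ and $m^{2d}\asymp\ln(\lambda^{-1})^{2d}$, yielding a bound of order $\lambda\ln(\lambda^{-1})^{2d}$. Since $\ln(\lambda)^{2d}=\ln(\lambda^{-1})^{2d}$, this matches \eqref{eq:OptValExp}.

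The only genuine care needed — and the mild obstacle — is bookkeeping the floor function $\lfloor(m-1)/2\rfloor$ versus $m$ inside $\mathcal{E}_{\phi}$ and making sure the ``sufficiently small $\lambda$'' hypothesis is used consistently: one needs $\lfloor(m-1)/2\rfloor\ge n_{0}$ so that the decay estimates \eqref{eq:errorpoly}–\eqref{eq:errorexp} apply, and one needs $m\ge 1$ so the chosen $m$ is a valid integer; both hold once $\lambda$ is below an explicit threshold depending on $n_{0}$, $s$ (resp.\ $\rho$), and $d$. Everything else is elementary: the two competing terms in the scalar bound are balanced by the stated choice of $m$, and all multiplicative constants ($c_{p}$, $C_{\phi,s}$ or $C_{\phi,\rho}$, the ceiling rounding, the $2^{s}$ or $e^{\rho/4}$ from the floor) are independent of $\lambda$ and are swept into $C_{s}$ and $C_{\rho}$.
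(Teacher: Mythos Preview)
Your proposal is correct and follows essentially the same approach as the paper: plug $W_{m}^{x}$ into the infimum, invoke the uniform bounds \eqref{eq:MoreDimNormBound} and \eqref{eq:sobolevkernelapproxHk}, and then choose $m=\lceil\lambda^{-1/s}\rceil$ (polynomial case) respectively $m\asymp\ln(\lambda^{-1})$ (exponential case). One small slip: in the exponential case you write $e^{-\rho m/4}\le\lambda^{1/4}\le\lambda$, but for $\lambda\in(0,1)$ one has $\lambda^{1/4}\ge\lambda$; the fix is simply to take $m=\lceil 4\rho^{-1}\ln(\lambda^{-1})\rceil$ (or any fixed multiple large enough), which still gives $m^{2d}\asymp\ln(\lambda^{-1})^{2d}$ and now $e^{-\rho m/4}\le\lambda$.
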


\begin{proof}
For the first case assume that \eqref{eq:errorpoly} holds true and
consider the function $s(\lambda)=\lambda^{-\frac{1}{s}}$. Employing
the bounds \eqref{eq:MoreDimNormBound} and \eqref{eq:sobolevkernelapproxHk},
it follows that 
\begin{align*}
\inf_{w\in L^{2}}\lambda\left\Vert w\right\Vert _{L^{2}}^{2}+\left\Vert L_{k}w-k_{x}\right\Vert _{k}^{2} & \le\lambda\left\Vert W_{m}^{x}\right\Vert _{L^{2}}^{2}+\left\Vert L_{k}W_{m}^{x}-k_{x}\right\Vert _{k}^{2}\\
 & \le\lambda c_{p}^{2}m^{2d}+4c_{p}^{2}m^{2d}\mathcal{E}_{\phi}\left(\left\lfloor \frac{m-1}{2}\right\rfloor \right).
\end{align*}
Setting $m=\left\lceil s(\lambda)\right\rceil $ and we get for sufficiently
small $\lambda$ (e.g. $\lambda\le(2n_{0}+1)^{-s}$), we further get
that 
\begin{align*}
\lambda c_{p}^{2}m^{2d}+2c_{p}^{2}m^{2d}\mathcal{E}_{\infty}\left(\left\lfloor \frac{m-1}{2}\right\rfloor \right) & =\lambda c_{p}^{2}\left\lceil s_{p}(\lambda)\right\rceil ^{2d}+4c_{p}^{2}\left\lceil s_{p}(\lambda)\right\rceil ^{2d}\mathcal{E}_{\phi}\left(\left\lfloor \frac{\left\lceil s(\lambda)\right\rceil -1}{2}\right\rfloor \right)\\
 & \le\lambda c_{p}^{2}(\lambda^{-\frac{1}{s}}+1)^{2d}+4c_{p}^{2}\left\lceil (\lambda^{-\frac{1}{s}}+1)^{2d}\right\rceil ^{2d}\mathcal{E}_{\phi}\left(\left\lfloor \frac{\left\lceil \lambda^{-\frac{1}{s}}\right\rceil -1}{2}\right\rfloor \right)\\
 & \le\lambda c_{p}^{2}2^{2d}\lambda^{-\frac{2d}{s}}+4c_{p}^{2}2^{2d}\lambda^{-\frac{2d}{s}}C_{\phi,s}\left\lfloor \frac{\left\lceil \lambda^{-\frac{1}{s}}\right\rceil -1}{2}\right\rfloor ^{-s}
\end{align*}
by \eqref{eq:errorpoly}. Now note that $\left\lfloor \frac{\left\lceil \lambda^{-\frac{1}{s}}\right\rceil -1}{2}\right\rfloor \le\frac{\lambda^{-\frac{1}{s}}-3}{2}$
as well as $\lambda^{-\frac{1}{s}}-3\le4^{-1}\lambda^{-\frac{1}{s}}$
holds for every $\lambda\in(0,4^{-s})$. Thus, have that
\begin{align*}
\lambda c_{p}^{2}2^{2d}\lambda^{-\frac{2d}{s}}+4c_{p}^{2}2^{2d}\lambda^{-\frac{2d}{s}}C_{\phi,s}\left\lfloor \frac{\left\lceil \lambda^{-\frac{1}{s}}\right\rceil -1}{2}\right\rfloor ^{-s} & \le\lambda c_{p}^{2}2^{2d}\lambda^{-\frac{2d}{s}}+4c_{p}^{2}2^{2d}\lambda^{-\frac{2d}{s}}C_{\phi,s}2^{s}\left(\lambda^{-\frac{1}{s}}-3\right)^{-s}\\
 & \le\lambda c_{p}^{2}2^{2d}\lambda^{-\frac{2d}{s}}+4c_{p}^{2}2^{2d}\lambda^{-\frac{2d}{s}}C_{\phi,s}8^{s}\lambda^{-\frac{1}{s}}\\
 & \le\lambda\left(c_{p}^{2}2^{2d}+4c_{p}^{2}2^{2d}C_{\phi,s}8^{s}\right)\lambda^{-\frac{2d}{s}}\\
 & =\lambda C_{s}\lambda^{-\frac{2d}{s}}
\end{align*}
where $C_{s}\coloneqq\left(c_{p}^{2}2^{2d}+4c_{p}^{2}2^{2d}C_{\phi,s}8^{s}\right)$.
This is the first assertion. The second follows along the same lines,
by considering the function $s(\lambda)=\ln(\lambda^{-1})$.
\end{proof}
\begin{rem}
The decay assumptions \eqref{eq:errorpoly} and \eqref{eq:errorexp}
do not depict all possible scenarios of $\mathcal{E}_{\phi}$. For
example for the Gaussian kernel, the error function decays super exponentially,
i.e., it holds 
\[
\mathcal{E}_{\phi}(n)\le Ce^{-\rho n\ln n}
\]
for some $\rho>0$. However, these cases provide only minor improvements
to the final results and are therefore not considered throughout this
work. 
\end{rem}

\subsection{Proof of the main result}

In what follows we provide the proof of the main Theorem~\ref{thm:Main}.
To this end, we establish the mentioned characterization \eqref{eq:centralobj}
of the maximal marginal degrees of freedom first. Subsequently, we
address the error function $\mathcal{E}_{\phi}$, where we link the
decay assumptions \eqref{eq:poldec} and \eqref{eq:expdec} to the
assumptions \ref{asuA} and \ref{asuB}. Finally, we connect Theorem~\ref{thm:CentralBounds}
with the identity \eqref{eq:centralobj} and prove the main result.

The starting point of the characterization \eqref{eq:MMDF} is the
minimizer of \eqref{eq:centralobj}. To get an explicit representation
of this minimizer we employ the following auxiliary result for the
directional derivative. 
\begin{lem}
Consider the linear functionals $f,g\colon L^{2}\to\mathbb{R}$ with
$f(w)=\left\langle w,w\right\rangle _{L^{2}(P)}$ and $g(w)=\left\langle a,w\right\rangle _{L^{2}(P)}$
with $a\in L^{2}$. Then, the directional derivatives are
\begin{align}
\nabla_{v}f(w) & =2\left\langle w,v\right\rangle _{L^{2}(P)}\label{eq:Derivative1}\\
\nabla_{v}g(w) & =\left\langle f,v\right\rangle _{L^{2}(P)},\label{eq:Derivative2}
\end{align}
where $v\in L^{2}(P)$.
\end{lem}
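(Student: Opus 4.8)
The plan is to verify these two directional derivative formulas by a direct computation from the definition of the Gateaux derivative, namely $\nabla_v h(w) = \lim_{t\to 0} t^{-1}\bigl(h(w+tv) - h(w)\bigr)$. This is a routine expansion exploiting bilinearity and symmetry of the inner product $\langle\cdot,\cdot\rangle_{L^2(P)}$.

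For the quadratic functional $f(w)=\langle w,w\rangle_{L^2(P)}$, I would expand
\begin{align*}
f(w+tv) - f(w) &= \langle w+tv, w+tv\rangle_{L^2(P)} - \langle w,w\rangle_{L^2(P)}\\
 &= 2t\langle w,v\rangle_{L^2(P)} + t^2\langle v,v\rangle_{L^2(P)},
\end{align*}
using symmetry to combine the two cross terms. Dividing by $t$ and letting $t\to 0$ kills the $t^2$ term and leaves $\nabla_v f(w) = 2\langle w,v\rangle_{L^2(P)}$, which is \eqref{eq:Derivative1}. For the linear functional $g(w)=\langle a,w\rangle_{L^2(P)}$, linearity in the second argument gives $g(w+tv)-g(w) = t\langle a,v\rangle_{L^2(P)}$ exactly, so the difference quotient is constant in $t$ and the limit is simply $\langle a,v\rangle_{L^2(P)}$; this matches \eqref{eq:Derivative2} (reading the $f$ on its right-hand side as the functional's coefficient $a$).

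There is essentially no obstacle here: the only subtlety worth a sentence is observing that $g$ is genuinely linear, so its directional derivative is itself and no limit argument beyond cancellation is needed, while $f$ is quadratic and the $O(t^2)$ remainder is what vanishes. I would also note that both limits exist for every $w,v\in L^2(P)$, so the functionals are Gateaux differentiable everywhere, which is all that is needed to extract the first-order optimality condition for \eqref{eq:centralobj} in the step that follows.
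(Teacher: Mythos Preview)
Your proof is correct and follows essentially the same route as the paper: expand $f(w+hv)-f(w)$ using bilinearity and symmetry, divide by $h$, and pass to the limit, then note that \eqref{eq:Derivative2} follows analogously (indeed trivially) by linearity. Your parenthetical remark that the $f$ on the right-hand side of \eqref{eq:Derivative2} should be read as the coefficient $a$ is also correct---that is a typo in the statement.
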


\begin{proof}
For $h>0$ and $v\in L^{2}(P)$ it holds that 
\[
f(w+hv)-f(w)=\left\langle w+hv,w+hv\right\rangle _{L^{2}(P)}-\left\langle w,w\right\rangle _{L^{2}(P)}=h^{2}\left\langle v,v\right\rangle +2h\left\langle w,v\right\rangle 
\]
by utilizing the linearity of the $L^{2}(P)$ inner product. Thus,
the directional derivative is 
\[
\lim_{h\to0}\frac{f(w+hv)-f(w)}{h}=\lim_{h\to0}h\left\langle v,v\right\rangle _{L^{2}(P)}+2\left\langle w,v\right\rangle _{L^{2}(P)}=2\left\langle w,v\right\rangle _{L^{2}(P)}.
\]
and thus \eqref{eq:Derivative1}. \eqref{eq:Derivative2} follows
by the same approach. 
\end{proof}
\begin{lem}
\label{lem:centralid} The function $w_{\lambda}^{x}=(\lambda+L_{k})^{-1}k_{x}$
minimizes the objective \eqref{eq:centralobj}. Moreover, the identity
\begin{equation}
\mathcal{N}_{\infty}(\lambda)=\sup_{x\in\mathcal{X}}\inf_{w\in L^{2}(P)}\left\Vert w\right\Vert _{L^{2}(P)}^{2}+\lambda^{-1}\left\Vert L_{k}w-k_{x}\right\Vert _{k}^{2}\label{eq:OptWeightFunc}
\end{equation}
holds for every $\lambda>0$. 
\end{lem}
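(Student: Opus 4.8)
The statement has two parts: first, that $w_\lambda^x = (\lambda + L_k)^{-1} k_x$ minimizes the convex functional $w \mapsto \lambda \|w\|_{L^2(P)}^2 + \|L_k w - k_x\|_k^2$; second, that plugging this minimizer back in yields the identity \eqref{eq:OptWeightFunc}. The plan is to treat the two parts in that order, using the directional-derivative lemma just proved as the main computational tool for the first part, and then a direct algebraic simplification for the second.

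\textbf{Step 1 (first-order optimality).} I would first note that the objective $J(w) \coloneqq \lambda \langle w, w\rangle_{L^2(P)} + \|L_k w - k_x\|_k^2$ is convex (sum of a quadratic and a composition of the affine map $w \mapsto L_k w - k_x$ with the squared $\|\cdot\|_k$-norm), coercive for $\lambda > 0$, and Gâteaux differentiable on $L^2(P)$; hence a point is a global minimizer if and only if all directional derivatives vanish there. Using the previous lemma, $\nabla_v(\lambda\langle w,w\rangle) = 2\lambda\langle w, v\rangle_{L^2(P)}$. For the second term I would expand $\|L_k w + h L_k v - k_x\|_k^2 - \|L_k w - k_x\|_k^2$, use bilinearity of $\langle\cdot,\cdot\rangle_k$, and the identity \eqref{eq:normid} (which gives $\langle L_k v, g\rangle_k = \langle v, w_g\rangle_{L^2(P)}$ where $g = L_k^{1/2}w_g$, equivalently $\langle L_k v, L_k w - k_x\rangle_k = \langle v, L_k w - k_x\rangle_{L^2(P)}$ once one checks $L_k w - k_x \in \mathcal{H}_k$ and uses that $L_k$ is the ``$\langle\cdot,\cdot\rangle_k$-to-$\langle\cdot,\cdot\rangle_{L^2}$'' adjoint pairing) to obtain $\nabla_v \|L_k w - k_x\|_k^2 = 2\langle v, L_k w - k_x\rangle_{L^2(P)}$. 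Setting the sum to zero for all $v \in L^2(P)$ gives $\lambda w + (L_k w - k_x) = 0$ in $L^2(P)$, i.e. $(\lambda + L_k) w = k_x$, so $w = w_\lambda^x = (\lambda + L_k)^{-1} k_x$, which is well-defined since $\lambda > 0$ and $L_k$ is positive. One should here be slightly careful: $L_k$ maps into $\mathcal{H}_k \subset L^2(P)$, $k_x \in \mathcal{H}_k \subset L^2(P)$, and $(\lambda + L_k)^{-1}$ acts on $L^2(P)$, so the algebra is consistent.

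\textbf{Step 2 (value at the minimizer).} With $w = w_\lambda^x$, I would compute $\|w\|_{L^2(P)}^2 + \lambda^{-1}\|L_k w - k_x\|_k^2$. From $(\lambda + L_k)w = k_x$ we get $L_k w - k_x = -\lambda w$, so $\|L_k w - k_x\|_k^2 = \lambda^2 \|w\|_k^2$. Hmm — but this is the $\|\cdot\|_k$-norm of $w$, whereas the regularizer uses the $L^2(P)$-norm, so a little more care is needed: instead I would write $\|L_k w - k_x\|_k^2 = \lambda^2 \langle w, w\rangle_k$ only if $w \in \mathcal{H}_k$, which need not hold. The cleaner route is: $\lambda^{-1}\|L_k w - k_x\|_k^2 = \lambda^{-1}\langle k_x - L_k w,\, k_x - L_k w\rangle_k$; substitute $k_x - L_k w = \lambda w$ inside \emph{one} slot and $k_x - L_k w = k_x - L_k(\lambda+L_k)^{-1}k_x = \lambda(\lambda+L_k)^{-1}k_x$ and use the pairing $\langle L_k v, g\rangle_k = \langle v, g\rangle_{L^2}$ to land in $L^2(P)$. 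Concretely, $\langle \lambda w, \lambda w\rangle_k$ is awkward, so instead expand the full sum: $\|w\|_{L^2}^2 + \lambda^{-1}\|k_x - L_k w\|_k^2 = \langle w, w\rangle_{L^2} + \lambda^{-1}\langle k_x, k_x - L_k w\rangle_k - \lambda^{-1}\langle L_k w, k_x - L_k w\rangle_k$. The middle term equals $\lambda^{-1}\langle k_x, \lambda w\rangle_k = \langle k_x, w\rangle_k$, and the third equals $\lambda^{-1}\langle w, k_x - L_k w\rangle_{L^2} = \langle w, w\rangle_{L^2}$ (using $k_x - L_k w = \lambda w$ and the $L_k$-pairing). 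So the sum collapses to $\langle k_x, w\rangle_k = \langle k_x, (\lambda+L_k)^{-1}k_x\rangle_k$, and taking $\sup_{x\in\mathcal{X}}$ gives exactly $\mathcal{N}_\infty(\lambda)$ by \eqref{eq:1}. I would double-check the pairing identity $\langle L_k v, g\rangle_k = \langle v, g\rangle_{L^2(P)}$ for $v \in L^2(P)$, $g \in \mathcal{H}_k$ by writing $g = L_k^{1/2} u$ and $L_k v = L_k^{1/2}(L_k^{1/2} v)$, then applying \eqref{eq:normid}: $\langle L_k v, g\rangle_k = \langle L_k^{1/2} v, u\rangle_{L^2} = \langle v, L_k^{1/2} u\rangle_{L^2} = \langle v, g\rangle_{L^2}$, using self-adjointness of $L_k^{1/2}$.

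\textbf{Main obstacle.} The genuinely delicate point is the bookkeeping between the two inner products $\langle\cdot,\cdot\rangle_k$ and $\langle\cdot,\cdot\rangle_{L^2(P)}$ and the fact that $w_\lambda^x$ lives a priori only in $L^2(P)$, not in $\mathcal{H}_k$: every manipulation of $\|L_k w - k_x\|_k$ must route through the factorization $\mathcal{H}_k = L_k^{1/2}(L^2(P))$ and the norm identity \eqref{eq:normid}, and one has to be disciplined about which slot is being rewritten. Once the adjoint-type pairing $\langle L_k v, g\rangle_k = \langle v, g\rangle_{L^2(P)}$ is established, both the stationarity computation and the value computation are short; the rest (convexity, coercivity, differentiability justifying that a stationary point is the global minimizer) is routine.
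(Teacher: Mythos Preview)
Your proposal is correct and follows essentially the same approach as the paper: establish the minimizer via convexity and vanishing directional derivatives (yielding $(\lambda+L_k)w=k_x$), then exploit the residual identity $k_x-L_k w_\lambda^x=\lambda w_\lambda^x$ together with the pairing $\langle L_k v,g\rangle_k=\langle v,g\rangle_{L^2(P)}$ to reduce the optimal value to $\langle k_x,(\lambda+L_k)^{-1}k_x\rangle_k$. Your concern that $w_\lambda^x$ might lie only in $L^2(P)$ is in fact unnecessary here---since $k_x\in\mathcal{H}_k$ and $(\lambda+L_k)^{-1}$ is bounded on $\mathcal{H}_k$, the minimizer $w_\lambda^x$ automatically belongs to $\mathcal{H}_k$, which is exactly what the paper uses (more tersely) when it writes $\|L_k w_\lambda^x-k_x\|_k^2=\lambda^2\|w_\lambda^x\|_k^2$.
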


\begin{proof}
Expanding the objective \eqref{eq:centralobj} in terms of inner products,
we have that
\[
\lambda\left\Vert w\right\Vert _{L^{2}(P)}^{2}+\left\Vert L_{k}w-k_{x}\right\Vert _{k}^{2}=\lambda\left\langle w,w\right\rangle _{L^{2}(P)}+\left\langle L_{k}w,w\right\rangle _{L^{2}(P)}-2\left\langle w,k_{x}\right\rangle _{L^{2}(P)}+\left\langle k_{x},k_{x}\right\rangle _{k},
\]
by employing the relationship \ref{eq:normid}. This objective is
convex and a point $w$ therefore is a minimizer of \eqref{eq:centralobj},
if the directional derivative vanishes for every direction $v\in L^{2}(P)$.
By \eqref{eq:Derivative1} and \eqref{eq:Derivative2}, this condition
is satisfied for 
\[
2\lambda w+2L_{k}w-2k_{x}=0,
\]
which holds for
\begin{equation}
w_{\lambda}^{x}=(\lambda+L_{k})^{-1}k_{x}.\label{eq:weightoptfunc}
\end{equation}
Thus, we have that 
\[
\inf_{w\in L^{2}}\lambda\left\Vert w\right\Vert _{L^{2}(P)}^{2}+\left\Vert L_{k}w-k_{x}\right\Vert _{k}^{2}=\lambda\left\Vert w_{\lambda}^{x}\right\Vert _{L^{2}(P)}^{2}+\left\Vert L_{k}w_{\lambda}^{x}-k_{x}\right\Vert _{k}^{2}
\]
for every $x\in\mathcal{X}$, which is the first assertion.

For the second observe the residual relation $k_{x}-L_{k}w_{\lambda}^{x}=(\lambda+L_{k})w_{\lambda}^{x}-L_{k}w_{\lambda}^{x}=\lambda w_{\lambda}^{x}$.
Hence, the objective is 
\begin{align*}
\lambda\left\Vert w_{\lambda}^{x}\right\Vert _{L^{2}(P)}^{2}+\left\Vert L_{k}w_{\lambda}^{x}-k_{x}\right\Vert _{k}^{2} & =\lambda\left\Vert w_{\lambda}^{x}\right\Vert _{L^{2}(P)}^{2}+\lambda^{2}\left\Vert w_{\lambda}^{x}\right\Vert _{k}^{2}\\
 & =\lambda\left\langle w_{\lambda}^{x},L_{k}w_{\lambda}^{x}\right\rangle _{k}+\lambda\left\langle w_{\lambda}^{x},\lambda w_{\lambda}^{x}\right\rangle _{k}\\
 & =\lambda\left\langle w_{\lambda}^{x},k_{x}\right\rangle _{k}
\end{align*}
as $(\lambda+L_{k})w_{\lambda}=k_{x}$. Taking the supremum in $x$
it follows that
\[
\sup_{x\in\mathcal{X}}\inf_{w\in L^{2}}\lambda\left\Vert w_{\lambda}^{x}\right\Vert _{L^{2}(P)}^{2}+\left\Vert L_{k}w_{\lambda}^{x}-k_{x}\right\Vert _{k}^{2}=\sup_{x\in\mathcal{X}}\lambda\left\langle w_{\lambda}^{x},k_{x}\right\rangle _{k}=\lambda\mathcal{N}_{\infty}(\lambda).
\]
Multiplying both side with $\lambda^{-1}$ provides the assertion. 

The characterization \eqref{eq:OptWeightFunc} as well as the bounds
in Theorem~\ref{thm:CentralBounds} are already sufficient to prove
Theorem~\ref{thm:Main}. However, the underlying assumptions of Theorem~\ref{thm:CentralBounds}
and Theorem~\ref{thm:Main} are quite different. While the former
considers smoothness properties of the outer function, the latter
considers the decay of the associated series remainder. In what follows
we connect these concepts, basic theory of approximating function
by orthogonal polynomials 
\end{proof}
\begin{prop}
Let $\phi\colon[0,1]\to\mathbb{R}$ with expansion $\phi(x)=\sum_{\ell=0}^{\infty}c_{\ell}Q_{\ell}(x)$.
If Assumption~\ref{asuA} is satisfied, it holds for $n\ge s+1$
that 
\begin{equation}
|c_{n}|\le\frac{\sqrt{2}V_{s}}{\sqrt{2n+1}\sqrt{\pi(2n-2s-1)}}\prod_{k=1}^{s}h_{n-k},\label{eq:poldec}
\end{equation}
where $h_{n}\coloneqq\left(n+\frac{1}{2}\right)^{-1}$ and the product
is assumed to be $0$ if $s=0$.

If Assumption \ref{asuB} is satisfied, then for each $k\ge0$ 
\begin{equation}
|c_{0}|\le\frac{\sqrt{2}D(\rho)}{2},\hspace{1em}|c_{n}|\le D(\rho)\frac{\sqrt{2}n^{\frac{1}{2}}}{\sqrt{2n+1}\rho^{n}},\hspace{1em}n\ge1,\label{eq:expdec}
\end{equation}
where $D(\rho)$ is defined by 
\[
D(\rho)\coloneqq\frac{2L(\mathcal{E}_{\rho})}{\pi\sqrt{\rho^{2}-1}}\max_{z\in\mathcal{E}_{\rho}}|\tilde{\phi}(z)|.
\]
Here $L(\mathcal{E}_{\rho})$ denotes the length of the circumference
of $\mathcal{E}_{\rho}$. 
\end{prop}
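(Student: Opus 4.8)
The plan is to reduce the proposition to the classical Legendre–coefficient estimates of \citet[Theorem~2.2]{Wang2018} and \citet[Lemma~2]{Wang2021}, the point being that the coefficients $c_n$ in the scaled orthonormal expansion $\phi=\sum_{\ell}c_\ell Q_\ell$ on $[0,1]$ are, up to an explicit scalar, exactly the ordinary Legendre coefficients of the affiliated function $\tilde\phi$ on $[-1,1]$. So the first step is the change-of-variables bookkeeping: using $Q_n(x)=\sqrt{2n+1}\,P_n(2x-1)$ from \eqref{eq:LegendreDerivativeRep} and $\tilde\phi(x)=\phi\big(\tfrac{x+1}{2}\big)$, I would substitute $t=2x-1$ in $c_n=\int_0^1\phi(x)Q_n(x)\,dx$ to obtain
\[
c_n=\frac{\sqrt{2n+1}}{2}\int_{-1}^{1}\tilde\phi(t)P_n(t)\,dt=\frac{b_n}{\sqrt{2n+1}},
\]
where $b_n\coloneqq\frac{2n+1}{2}\int_{-1}^{1}\tilde\phi(t)P_n(t)\,dt$ is the $n$-th coefficient of $\tilde\phi$ in the standard (non-normalized) Legendre basis. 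Consequently any bound on $|b_n|$ transfers to a bound on $|c_n|$ after division by $\sqrt{2n+1}$.

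Second, I would invoke the two cited results, whose hypotheses are by construction exactly Assumptions~\ref{asuA} and~\ref{asuB}. Under Assumption~\ref{asuA}, \citet[Theorem~2.2]{Wang2018} yields $|b_n|\le\frac{\sqrt2\,V_s}{\sqrt{\pi(2n-2s-1)}}\prod_{k=1}^{s}h_{n-k}$ for $n\ge s+1$; dividing by $\sqrt{2n+1}$ gives \eqref{eq:poldec}. Under Assumption~\ref{asuB}, \citet[Lemma~2]{Wang2021} gives the geometric bound $|b_n|\le D(\rho)\,\sqrt2\,n^{1/2}\rho^{-n}$ for $n\ge1$ together with the estimate for $|b_0|=|c_0|$; dividing by $\sqrt{2n+1}$ gives \eqref{eq:expdec}. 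If a self-contained derivation is wanted, the smooth case follows from $s+1$ successive integrations by parts in $\int_{-1}^{1}\tilde\phi\,P_n\,dt$, using at each step an antiderivative identity of the form $\big(\tfrac{P_{m+1}-P_{m-1}}{2m+1}\big)'=P_m$ (boundary terms vanishing by the endpoint values $P_k(\pm1)=(\pm1)^k$ and the absolute continuity of $\tilde\phi,\dots,\tilde\phi^{(s-1)}$), followed by the Bernstein-type pointwise bound $|P_n(t)|\lesssim n^{-1/2}(1-t^2)^{-1/4}$ — which is precisely where the weight $(1-t^2)^{-1/4}$ in the definition of $V_s$ enters — the reciprocals produced at each step being exactly the factors $h_{n-k}=\big(n-k+\tfrac12\big)^{-1}$; the analytic case follows from the contour representation of $\int_{-1}^{1}\tilde\phi(t)P_n(t)\,dt$ as an integral of $\tilde\phi$ against the Legendre function of the second kind over $\mathcal{E}_\rho$, combined with its decay of order $\rho^{-n}$ on that ellipse, which accounts for the constant $D(\rho)$.

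I do not expect a substantial analytic obstacle here: the proposition is essentially the statement of those two references pulled back through the affine map $[-1,1]\to[0,1]$ and the orthonormalization $P_n\mapsto Q_n$. The only real care needed is in the bookkeeping — tracking the normalization constants ($\sqrt2$, $\sqrt{2n+1}$), the index shift $n\mapsto n-k$ in $\prod_{k=1}^{s}h_{n-k}$, and the ranges of validity $n\ge s+1$ (resp.\ $n\ge1$) — so that the rescaled inequalities line up verbatim with \eqref{eq:poldec} and \eqref{eq:expdec}, and in checking once and for all that Assumptions~\ref{asuA} and~\ref{asuB} reproduce exactly the hypotheses of the cited theorems.
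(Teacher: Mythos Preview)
Your proposal is correct and follows essentially the same route as the paper: both pull the expansion back to $[-1,1]$ via $\tilde\phi$, identify $c_n$ with a scalar multiple of the standard Legendre coefficient of $\tilde\phi$, and then invoke the cited bounds of Wang. Your additional sketch of how Wang's bounds themselves are obtained (iterated integration by parts plus the Bernstein bound, resp.\ the contour-integral argument on $\mathcal{E}_\rho$) is extra detail the paper omits, but the core reduction is identical.
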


\begin{proof}
Let $\phi\colon[0,1]\to\mathbb{R}$ with an expansion
\[
\phi(x)=\sum_{\ell=0}^{\infty}c_{\ell}Q_{\ell}(x).
\]
in terms of the scaled Legendre polynomials. Further let $\tilde{\phi}\colon[-1,1]\to\mathbb{R}$
be defined as $\tilde{\phi}(x)=\phi\left(\frac{x+1}{2}\right)$. Employing
the identity $Q_{\ell}\left(\frac{x+1}{2}\right)=P_{\ell}(x)$ from
\eqref{eq:LegendreDerivativeRep}, this function has the representation
\begin{align*}
\tilde{\phi}(x) & =\sum_{\ell=0}^{\infty}c_{\ell}Q_{\ell}\left(\frac{x+1}{2}\right)\\
 & =\sum_{\ell=0}^{\infty}\sqrt{\frac{2\ell+1}{2}}c_{\ell}P_{\ell}(x)=\sum_{\ell=0}^{\infty}a_{\ell}P_{\ell}(x).
\end{align*}
with $a_{\ell}=\sqrt{\frac{2\ell+1}{2}}c_{\ell}$. Hence, the expansion
of $\tilde{f}$ in terms of the usual Legendre polynomials has coefficients
$a_{\ell}$. By \citet[Theorem 2.2]{Wang2018} and \citet[Lemma 2]{Wang2021}
the coefficients satisfy the bound
\[
|a_{n}|\le\frac{V_{s}}{\sqrt{\pi(2n-2s-1)}}\prod_{k=1}^{s}h_{n-k}
\]
if Assumption~\ref{asuA} holds as well as 
\[
|a_{0}|\le\frac{D(\rho)}{2},\hspace{1em}|a_{n}|\le D(\rho)\frac{n^{\frac{1}{2}}}{\rho^{n}}
\]
if Assumption~\ref{asuB} holds. The assertion is now immediate by
the identity $a_{\ell}=\sqrt{\frac{2\ell+1}{2}}c_{\ell}$. 
\end{proof}
The term on the right hand side of \eqref{eq:poldec} is slightly
inconvenient, as it involves a product of different terms, rather
than a power of $n$. However, for $n\ge2s+1$ we may simplify this
term. Indeed, for $n\ge2s+1$, the inequality $\frac{n}{2}\le n-k+\frac{1}{2}$
holds true for $k\le s$ and therefore 
\begin{align}
\frac{\sqrt{2}V_{s}}{\sqrt{\pi(2n-2s-1)}}\prod_{k=1}^{s}h_{n-k} & =\frac{\sqrt{2}V_{s}}{\sqrt{\pi(2n-2s-1)}}\prod_{k=1}^{s}\left(n-k+\frac{1}{2}\right)^{-1}\le\frac{\sqrt{2}V_{s}}{\text{\ensuremath{\sqrt{\pi n}}}}\prod_{k=1}^{s}\frac{2}{n}=C_{V}n^{-(s+\frac{1}{2})}\label{eq:ConvinientBound}
\end{align}
 with $C_{V}\coloneqq\sqrt{2}2^{s}V_{s}\pi^{-\frac{1}{2}}$. Building
on this bound we now prove the main results of Theorem~\ref{thm:Main}.
To this end, we connect the results of this section with Theorem~\eqref{thm:CentralBounds}.
\begin{proof}[Proof of Theorem~\ref{thm:Main}]
 If \ref{asuA} is satisfied, it follows from \eqref{eq:poldec}
and \eqref{eq:ConvinientBound} that 
\[
\left|c_{\ell}\right|\le\frac{1}{\sqrt{2\ell+1}}C_{V}\ell^{-(s+\frac{1}{2})}
\]
holds for all $\ell\ge2s+1$. Therefore, as $\max_{z\in[0,1]}\left|\frac{1}{\sqrt{2\ell+1}}Q_{\ell}(z)\right|\le1$,
we have for the error function that 
\begin{align*}
\mathcal{E}_{\phi}(n) & \le\max_{z\in[0,1]}\left|\sum_{\ell=n+1}^{\infty}c_{\ell}Q_{\ell}(z)\right|\le\sum_{\ell=n+1}^{\infty}C_{V}n^{-(s+\frac{1}{2})}
\end{align*}
whenever $n\ge2s+1$. For the right hand side it follows that
\[
\sum_{\ell=n+1}^{\infty}C_{V}n^{-(s+\frac{1}{2})}\le C_{V}\int_{n}^{\infty}x^{-(s+\frac{1}{2})}dx=(s+\frac{1}{2})\cdot n^{-s+\frac{1}{2}}
\]
and thus $\mathcal{E}_{\phi}(n)$ has the form \eqref{eq:errorpoly}.
Hence, we get from \eqref{eq:OptWeightFunc} as well as \eqref{eq:OptValPoly}
that 
\[
\mathcal{N}_{\infty}(\lambda)=\sup_{x\in\mathcal{X}}\inf_{w\in L^{2}(P)}\left\Vert w\right\Vert _{L^{2}}^{2}+\lambda^{-1}\left\Vert L_{k}w-k_{x}\right\Vert _{k}^{2}\le C_{s}\lambda^{-\frac{2d}{s-\frac{1}{2}-d}}
\]
and thus the first assertion. For the second, we get from the same
\[
\mathcal{E}_{\phi}(n)\le\sum_{\ell=n+1}^{\infty}\ell^{\frac{1}{2}}e^{-\rho\ell}\le\int_{n}^{\infty}x^{\frac{1}{2}}e^{-\rho x}dx\le\int_{n}^{\infty}e^{-\frac{\rho}{2}x}dx=\frac{2}{\rho}e^{-\frac{\rho n}{2}}
\]
holds true. Thus, employing \eqref{eq:OptWeightFunc} as well as \eqref{eq:OptValPoly}
that we have that 

\[
\mathcal{N}_{\infty}(\lambda)=\sup_{x\in\mathcal{X}}\inf_{w\in L^{2}(P)}\left\Vert w\right\Vert _{L^{2}}^{2}+\lambda^{-1}\left\Vert L_{k}w-k_{x}\right\Vert _{k}^{2}\le C_{\rho}\ln(\lambda)^{2d}.
\]
This concludes the proof.
\end{proof}
\bibliographystyle{abbrvnat}
\bibliography{PaulLibNew}

\end{document}